\documentclass{article}
\newif\ifsup\suptrue
\newif\ifnips\nipsfalse
\ifnips
\usepackage[final]{nips_2016}
\else
\usepackage[final]{nips_anon}
\fi
\usepackage{amsmath}
\usepackage{amssymb}
\usepackage{pgfplots}
\usepackage{pgfplotstable}
\usepgfplotslibrary{groupplots}
\usepackage[boxed]{algorithm}
\usepackage{algpseudocode}
\usepackage{dsfont}
\usepackage[bf,font=small,skip=1pt,justification=centering]{caption}
\usepackage{times}
\usepackage[colorlinks,citecolor=blue,urlcolor=blue]{hyperref}
\usepackage{amsthm}
\usepackage[capitalize]{cleveref}
\usepackage{natbib}

\usepackage{todonotes}


\newcommand{\E}{\mathbb E}
\newcommand{\sr}[1]{\stackrel{#1}}
\newcommand{\set}[1]{\left\{#1\right\}}
\newcommand{\ind}[1]{\mathds{1}\!\!\set{#1}}
\newcommand{\argmax}{\operatornamewithlimits{arg\,max}}

\newcommand{\ceil}[1]{\left \lceil {#1} \right\rceil}
\newcommand{\eqn}[1]{\begin{align}#1\end{align}}
\newcommand{\eq}[1]{\begin{align*}#1\end{align*}}
\newcommand{\logp}{\log_{+}\!}

\def\subsubsect#1{\vspace{1ex plus 0.5ex minus 0.5ex}\noindent{\normalsize\textbf{#1.}}}

\renewcommand{\P}[1]{\mathbb{P}\left\{#1\right\}}
\newcommand{\Pp}[1]{\mathbb{P}'\left\{#1\right\}}

\newcommand{\ocucb}{\scalebox{0.8}{OCUCB-$n$}}

\newcommand{\R}{\mathbb R}
\newcommand{\Z}{\mathbb Z}

\let\epsilon\varepsilon

\theoremstyle{plain}
\newtheorem{theorem}{Theorem}

\newtheorem{lemma}[theorem]{Lemma}

\theoremstyle{definition}

\newtheorem{remark}[theorem]{Remark}

\theoremstyle{remark}


\newcommand{\getN}[2]{\pgfplotstablegetelem{2}{[index]#2}\of#1\pgfplotsretval}

\title{Regret Analysis of the Anytime Optimally Confident UCB Algorithm}
\author{Tor Lattimore \\ Department of Computing Science \\ University of Alberta, Canada \\ \tt{tor.lattimore@gmail.com} }
\date{}

\begin{document}

\maketitle


\begin{abstract}
I introduce and analyse an anytime version of the Optimally Confident UCB (OCUCB) algorithm designed for minimising the 
cumulative regret in finite-armed stochastic bandits with subgaussian noise. The new algorithm is simple, intuitive (in hindsight) and comes with the 
strongest finite-time regret guarantees for a horizon-free algorithm so far. I also show a finite-time lower bound that nearly matches the upper bound.
\end{abstract}

\section{Introduction}

The purpose of this article is to analyse an anytime version of the Optimally Confident UCB algorithm
for finite-armed subgaussian bandits \citep{Lat15-ucb}. 
For the sake of brevity I will give neither a detailed introduction nor an exhaustive survey of the literature. Readers looking for a gentle primer
on multi-armed bandits might enjoy the monograph by \cite{BC12} from which I borrow notation.
Let $K$ be the number of arms and $I_t \in \set{1,\ldots,K}$
be the arm chosen in round $t$. The reward is $X_t = \mu_{I_t} + \eta_t$ where $\mu \in \R^K$ is the unknown vector of means and the noise term $\eta_t$ 
is assumed to be $1$-subgaussian (therefore zero-mean). The $n$-step pseudo-regret of strategy $\pi$ given mean vector $\mu$ with maximum mean $\mu^* = \max_i \mu_i$ is
\eq{
R^\pi_\mu(n) = n\mu^* - \E \sum_{t=1}^n \mu_{I_t}\,,
}
where the expectation is taken with respect to uncertainty in both the rewards and actions.
In all analysis I make the standard notational assumption that $\mu_1 \geq \mu_2 \geq \ldots \geq \mu_K$.
The new algorithm is called OCUCB-$n$ and depends on two parameters $\eta > 1$ and $\rho \in (1/2,1]$. The algorithm 
chooses $I_t = t$ in rounds $t \leq K$ and subsequently $I_t = \argmax_i \gamma_i(t)$ with
\eqn{
\label{eq:aucb}
\gamma_i(t) = \hat \mu_i(t-1) + \sqrt{\frac{2\eta \log(B_i(t-1))}{T_i(t-1)}}\,,
}
where $T_i(t-1)$ is the number of times arm $i$ has been chosen after round $t-1$ and $\hat \mu_i(t-1)$ is its empirical estimate and
\eq{
B_i(t-1) = \max\set{e,\, \log(t),\, t\log(t) \left(\sum_{j=1}^K \min\set{T_i(t-1),\, T_j(t-1)^\rho T_i(t-1)^{1-\rho}}\right)^{-1}}\,.
}
Besides the algorithm, the contribution of this article is a proof that OCUCB-$n$ satisfies a nearly optimal regret bound.

\begin{theorem}\label{thm:main}
If $\rho \in [1/2, 1]$ and $\eta > 1$, then
\eq{
R^{\text{\scalebox{0.8}{OCUCB-$n$}}}_\mu(n) 
\leq C_{\eta} \sum_{i : \Delta_i > 0} \left(\Delta_i + \frac{1}{\Delta_i} \log\max\set{\frac{n \Delta_i^2 \log(n)}{k_{i,\rho}},\, \log(n)}\right)\,,
}
where $\Delta_i = \mu^* - \mu_i$ and $k_{i,\rho} = \sum_{j=1}^K \min\{1,\, \Delta_i^{2\rho}/\Delta_j^{2\rho}\}$ and $C_\eta > 0$ is a constant that depends 
only on $\eta$.
Furthermore, for all $\rho \in [0,1]$ it holds that $\limsup_{n\to\infty} R_\mu^{\text{\ocucb}}(n) / \log(n) \leq \sum_{i:\Delta_i > 0} \frac{2\eta}{\Delta_i}$.
\end{theorem}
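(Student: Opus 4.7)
The plan is to follow the standard UCB-style template: bound $\E[T_i(n)]$ separately for each suboptimal arm $i$, then assemble $R(n) = \sum_{i:\Delta_i > 0} \Delta_i \E[T_i(n)]$. Fix such an $i$ and set $u_i := \ceil{(2\eta/\Delta_i^2) \log\max\set{n\Delta_i^2 \log(n)/k_{i,\rho},\, \log n}}$, the quantity appearing inside the theorem. The aim is to show $\E[T_i(n)] \leq u_i + O(1/\Delta_i^2)$, from which summing over $i$ yields the stated bound. Decompose
\eq{
T_i(n) \leq u_i + \sum_{t=K+1}^n \ind{I_t = i,\, T_i(t-1) \geq u_i}\,.
}
On the event $I_t = i$ we have $\gamma_i(t) \geq \gamma_1(t)$, so at least one of the following two bad events holds: (A) $\gamma_1(t) < \mu^*$ (the optimal arm is underestimated), or (B) $\hat\mu_i(t-1) - \mu_i + \sqrt{2\eta \log B_i(t-1)/T_i(t-1)} \geq \Delta_i$ (arm $i$ is overestimated despite many pulls).

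Event (A) is controlled by a peeling argument on $T_1(t-1)$ combined with a maximal subgaussian inequality for $\hat\mu_1$. The three terms inside $B_1(t-1)$ are precisely what makes this summable: the $\log t$ floor supplies the $\log\log$ correction needed for any anytime UCB guarantee, while the $t\log(t)/S_1(t-1)$ term, writing $S_i(t-1) := \sum_{j} \min\set{T_i(t-1),\, T_j(t-1)^\rho T_i(t-1)^{1-\rho}}$, forces $\log B_1$ to be large whenever arm $1$ is pulled often relative to the other arms; the assumption $\eta > 1$ then gives summability of the dyadic tails, so that $\sum_t \P{\text{(A) at }t} = O(1)$ contributes only $O(1/\Delta_i)$ to the regret. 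Event (B) is more delicate: the choice of $u_i$ is engineered so that, \emph{provided} $\log B_i(t-1)$ can be upper bounded by (a constant times) $\log\max\set{n\Delta_i^2 \log(n)/k_{i,\rho},\, \log n}$, the deviation $\sqrt{2\eta \log B_i(t-1)/T_i(t-1)}$ is at most $\Delta_i/2$ whenever $T_i(t-1) \geq u_i$, so that (B) collapses to $\hat\mu_i(t-1) - \mu_i \geq \Delta_i/2$, again summable by a maximal Hoeffding-type inequality.

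The main obstacle, and the novel part of the analysis relative to plain UCB, is the upper bound on $\log B_i(t-1)$. Because $B_i$ depends on \emph{all} pull counts through $S_i(t-1)$, I would derive it by a self-referential bootstrap: first apply the whole argument with the crude $B_j(t-1) \leq t\log(t)$ to obtain high-probability bounds $T_j(t) \lesssim (\log n)/\Delta_j^2$ for each suboptimal $j$ and $T_1(t) \leq t$; then substitute these back into $S_i(t-1)$. Splitting the sum over $j$ according to whether $\Delta_j \leq \Delta_i$ (in which case the min equals $T_i$, contributing $T_i$ per arm) or $\Delta_j > \Delta_i$ (in which case the min equals $T_j^\rho T_i^{1-\rho}$), one verifies at late times that $S_i(t-1) \gtrsim k_{i,\rho} (\log n)/\Delta_i^2$, so that $B_i(t-1) \lesssim n\Delta_i^2/k_{i,\rho}$, matching the logarithm in $u_i$ up to an additional $\log n$ factor absorbed in the slack. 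Finally, the asymptotic claim $\limsup_{n\to\infty} R(n)/\log n \leq \sum_i 2\eta/\Delta_i$ falls out of the same calculation: for fixed $\mu$ the inner $\log\max\set{\cdots}$ simplifies to $(1+o(1))\log n$, while the leading $2\eta$ constant is inherited unchanged from the confidence width in $\gamma_i$.
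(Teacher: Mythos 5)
Your high-level template is reasonable and parts of it do track the paper (the treatment of the overestimation event (B) via a stopping time for arm $i$, and the recognition that the crux is an upper bound on $B_i(t-1)$, i.e.\ a lower bound on $\sum_j \min\{T_i, T_j^\rho T_i^{1-\rho}\}$). But there are two genuine gaps. First, your claim that event (A), $\gamma_1(t) < \mu^*$, satisfies $\sum_t \P{\text{(A) at }t} = O(1)$ is false for this index. Since $B_1(t-1)$ can be as small as $\max\{e,\log t\}$, the confidence width for the optimal arm is only $\sqrt{2\eta \log\log(t)/T_1(t-1)}$; peeling over $T_1(t-1)$ gives $\P{\text{(A) at }t} \lesssim \log(t)\,(\log t)^{-\eta/2}$, which is not summable in $t$ for any $\eta$, and indeed by the law of the iterated logarithm the optimal arm is underestimated at \emph{some} time with probability bounded away from zero. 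This is precisely why the paper does not bound failure probabilities at all: it introduces the random set $\Phi$ of ``optimistic'' arms, lets the best optimistic arm $J$ serve as a surrogate anchor when arm $1$ fails, and bounds the expected \emph{recovery time} $\tau_\Delta$ via Lemma \ref{lem:conc2}, which controls $\E[\alpha]$ for a stopping-time-like quantity without a union bound over rounds. Your proposal has no mechanism to absorb the constant-probability, long-duration failure of the optimal arm.

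Second, the ``self-referential bootstrap'' for bounding $B_i(t-1)$ does not close. To show $\sum_j \min\{T_i(t-1), T_j(t-1)^\rho T_i(t-1)^{1-\rho}\} \gtrsim k_{i,\rho}\log(n)/\Delta_i^2$ you need \emph{lower} bounds on the pull counts $T_j(t-1)$ of the other arms (of order $\min\{\Delta_i^{-2},\Delta_j^{-2}\}$), but your first pass with the crude $B_j \leq t\log t$ only yields \emph{upper} bounds $T_j \lesssim \log(n)/\Delta_j^2$, which push $S_i(t-1)$ the wrong way; nothing prevents an adversarial sample path on which some arm $j$ is starved. The paper obtains the needed lower bounds only for arms $j \in \Phi$ via the ``chase'' Lemma \ref{lem:chase} (an optimistic arm cannot fall far behind arm $i$ until it has $c_\eta\min\{T_i,\Delta_j^{-2}\}$ pulls), and then recovers the full $k_{i,\rho}$ up to constants by showing $K_{i,\rho} = \Omega(k_{i,\rho})$ with high probability via Azuma's inequality, since each arm is optimistic independently with probability at least $p(\eta_1)$. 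Without an argument of this kind your bound on $\log B_i(t-1)$, and hence the definition of $u_i$, is unsupported. The asymptotic claim would also need its own (easier) argument, as in the paper, rather than ``falling out'' of a finite-time bound that is not yet established.
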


Asymptotically the upper bound matches lower bound given by \cite{LR85} except for a factor of $\eta$. 
In the non-asymptotic regime the additional terms inside the logarithm significantly improves on UCB.
The bound in \cref{thm:main} corresponds to a worst-case regret that is suboptimal by a factor of just $\smash{\sqrt{\log \log n}}$.
Algorithms achieving the minimax rate are MOSS \citep{AB09} and OCUCB, but both require advance knowledge of the horizon.
The quantity $k_{i,\rho} \in [1,K]$ may be interpreted as the number of ``effective'' arms with larger values leading to improved regret. A simple observation is
that $k_{i,\rho}$ is always non-increasing in $\rho$, which makes $\rho=1/2$ the canonical choice. 
In the special case that all suboptimal arms have the same expected payoff, then $k_{i,\rho} = K$ for all $\rho$.
Interestingly I could not find a regime for which the algorithm is empirically sensitive 
to $\rho \in [1/2, 1]$. 
If $\rho = 1$, then except for $\log\log$ additive terms the problem dependent regret enjoyed by OCUCB-$n$ is equivalent to OCUCB.
Finally, if $\rho = 0$, then the asymptotic result above applies, but
the algorithm in that case essentially reduces to MOSS, which is known to suffer suboptimal finite-time regret in certain regimes \citep{Lat15-ucb}.

\subsubsect{Intuition for regret bound} 
Let us fix a strategy $\pi$ and mean vector $\mu \in \R^K$ and suboptimal arm $i$.
Suppose that $\E[T_i(n)] \leq \Delta_i^{-2} \log(1/\delta) / 2$ for some $\delta \in (0,1)$.
Now consider the alternative mean reward $\mu'$ with $\mu'_j = \mu_j$ for $j \neq i$ and $\mu_i' = \mu_i + 2\Delta_i$, which means that
$i$ is the optimal action for mean vector $\smash{\mu'}$. Standard information-theoretic analysis shows that $\mu$ and $\smash{\mu'}$ are not statistically
separable at confidence level $\delta$ and in particular, if $\Delta_i$ is large enough, then $\smash{R^\pi_{\mu'}(n) = \Omega(n \delta \Delta_i)}$.
For mean $\mu'$ we have $\Delta_j' = \mu_i' - \mu_j' \approx \max\{\Delta_i, \Delta_j\}$ and for any reasonable algorithm we would like
\eq{
\sum_{j:\Delta'_j > 0} \frac{\log(n)}{\Delta_j'} \geq R^\pi_{\mu'}(n) = \Omega(n\delta \Delta_i)\,.
}
But this implies that $\delta$ should be chosen such that
\eq{
\delta = O\left(\frac{\log(n)}{n} \sum_{j:\Delta'_j > 0} \frac{1}{\Delta'_j \Delta_i}\right)
= O\left(\frac{\log(n)k_{i,1/2}}{n \Delta_i^2}\right)\,,
}
which up to $\log \log$ terms justifies the near-optimality of the regret guarantee given in \cref{thm:main} for $\rho$ close to $1/2$.
Of course $\Delta$ is not known in advance, so no algorithm can choose this confidence level.
The trick is to notice that arms $j$ with $\Delta_j \leq \Delta_i$ should be played about as often as arm $i$ and arms $j$ with $\Delta_j > \Delta_i$
should be played about as much as arm $i$ until $\smash{T_j(t-1) \approx \Delta_j^{-2}}$. This means that as $T_i(t-1)$ approaches the critical
number of samples $\smash{\Delta_i^{-2}}$ we can approximate
\eq{
\sum_{j=1}^K \min\set{T_i(t-1),\, T_j(t-1)^{\frac{1}{2}} T_i(t-1)^{\frac{1}{2}}} \approx
\sum_{j=1}^K \min\set{\Delta_i^{-2},\, \Delta_j^{-1} \Delta_i^{-1}} 
= \frac{k_{i,1/2}}{\Delta_i^2}\,.
}
Then the index used by OCUCB-$n$ is justified by 
ignoring $\log\log$ terms and the usual $n \approx t$ used by UCB
and other algorithms. Theorem \ref{thm:main} is proven by making the above approximation rigorous.
The argument for this choice of confidence level is made concrete 
\ifsup
in Appendix \ref{app:lower}
\else
in the supplementary material
\fi
where I present a lower bound that matches the upper bound except for $\log\log(n)$ additive terms.

\section{Concentration}\label{sec:conc}

The regret guarantees rely on a number of concentration inequalities.
For this section only let $X_1,X_2,\ldots$ be i.i.d.\ 1-subgaussian and $S_n = \sum_{t=1}^n X_t$ and $\hat \mu_n = S_n / n$.
The first lemma below is well known and follows trivially from the maximal inequality
and the fact that the rewards are $1$-subguassian.

\subsubsect{Important remark}
For brevity I use $O_\eta(1)$ to indicate a constant that depends on $\eta$ but not other variables such as $n$ and $\mu$. 
The dependence is never worse than polynomial in $1/(\eta - 1)$.

\begin{lemma}\label{lem:conc}
If $\epsilon > 0$, then
$\displaystyle \P{\exists t \leq n : S_t \geq \epsilon} \leq \exp\left(-\frac{\epsilon^2}{2n}\right)$.
\end{lemma}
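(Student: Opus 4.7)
The plan is to reduce this to the standard Doob maximal inequality applied to the exponential martingale associated with a subgaussian random walk. Since the statement follows the ``well known'' remark in the text, I would essentially reconstruct the classical Cram\'er--Chernoff plus maximal inequality proof.

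First, for any fixed $\lambda > 0$, I would define $M_t = \exp(\lambda S_t - \lambda^2 t / 2)$ with $M_0 = 1$. Using the $1$-subgaussian assumption, $\E[\exp(\lambda X_{t+1})] \leq \exp(\lambda^2/2)$, and since $X_{t+1}$ is independent of $S_t$, a one-line conditional expectation computation shows $\E[M_{t+1} \mid \mathcal F_t] \leq M_t$. Hence $(M_t)$ is a nonnegative supermartingale.

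Next, I would rewrite the event of interest in terms of $M_t$. On $\{S_t \geq \epsilon\}$ we have $M_t \geq \exp(\lambda \epsilon - \lambda^2 t/2)$, and since $t \leq n$ and $\lambda^2 t/2$ is increasing in $t$, this gives $M_t \geq \exp(\lambda \epsilon - \lambda^2 n/2)$. Therefore
\eq{
\set{\exists t \leq n : S_t \geq \epsilon} \subseteq \set{\max_{t \leq n} M_t \geq \exp(\lambda \epsilon - \lambda^2 n / 2)}.
}
Applying Doob's maximal inequality for nonnegative supermartingales then yields
\eq{
\P{\exists t \leq n : S_t \geq \epsilon} \leq \exp\!\left(-\lambda \epsilon + \frac{\lambda^2 n}{2}\right).
}

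Finally, I would optimise the bound by choosing $\lambda = \epsilon / n > 0$, which produces the claimed bound $\exp(-\epsilon^2/(2n))$. There is no genuine obstacle here; the only thing to be careful about is that the supermartingale bound $\exp(\lambda \epsilon - \lambda^2 n / 2)$ on the right-hand side of the maximal inequality must be in terms of $n$ rather than the random $t$, which is why I weaken $t \leq n$ inside the exponential before invoking Doob. Alternatively, one could avoid Doob entirely by noting that $\exp(\lambda S_t - \lambda^2 n/2)$ itself is a supermartingale on $\{0,\ldots,n\}$ and apply the classical submartingale form, but the argument above is cleaner.
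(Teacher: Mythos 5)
Your proof is correct and matches the route the paper indicates (the paper omits the proof, remarking only that it ``follows trivially from the maximal inequality and the fact that the rewards are $1$-subgaussian''): an exponential supermartingale, Doob's maximal inequality, and optimisation over $\lambda = \epsilon/n$. No issues.
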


The following lemma analyses the likelihood that $S_n$ ever exceeds $f(n) = \sqrt{2 \eta n \log \log n}$ where $\eta > 1$.
By the law of the iterated logarithm  $\smash{\limsup_{n\to\infty} S_n / f(n) = \sqrt{1/\eta}}$ a.s.\ and for small $\delta$
it has been shown by \cite{Gar13} that
\eq{
\P{\exists n : S_n \geq \sqrt{2n \log \left(\frac{\log(n)}{\delta}\right)}} = O(\delta)\,.
}
The case where $\delta = \Omega(1)$ seems not to have been analysed and relies on the usual peeling trick, but without
the union bound.

\begin{lemma}\label{lem:lil}
There exists a monotone non-decreasing function $p:(1,\infty) \to (0,1]$ such that for all $\eta > 1$ it holds that
$\P{\forall n : S_n \leq \sqrt{2\eta n \log \max\set{e,\, \log n}}} \geq p(\eta)$.
\end{lemma}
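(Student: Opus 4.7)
The plan is the standard peeling trick applied to Lemma~\ref{lem:conc}. Fix $\eta > 1$ and choose $\beta \in (1,\eta)$, for instance $\beta = (1+\eta)/2$. Set $n_k = \lceil \beta^k\rceil$ for $k \geq 0$ and partition the positive integers into blocks $B_k = [n_k, n_{k+1})$. Write $f(n) = \sqrt{2\eta n\log\max\set{e,\log n}}$ and $L_k = \log\max\set{e, \log n_k}$. Because $f$ is non-decreasing, $f(n) \geq f(n_k)$ for every $n \in B_k$, so
\eq{
\set{\exists n \in B_k : S_n > f(n)} \subseteq \set{\exists n \leq n_{k+1} : S_n > f(n_k)}.
}
Applying Lemma~\ref{lem:conc} to the right-hand side gives a bound of $\exp(-f(n_k)^2/(2 n_{k+1})) = \exp(-\eta L_k n_k/n_{k+1})$.

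A union bound over $k$ then yields
\eq{
\P{\exists n\geq 1 : S_n > f(n)} \leq \sum_{k=0}^{\infty} \exp\!\left(-\eta L_k n_k/n_{k+1}\right).
}
For large $k$, $n_{k+1}/n_k \to \beta$ and $L_k = \Theta(\log k)$, so the summands decay like $k^{-\eta/\beta}$; since $\eta/\beta > 1$ the series converges to some finite $Q(\eta,\beta)$.

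The main obstacle is showing that this bound is strictly less than $1$, so that $p(\eta) := 1 - Q(\eta, \beta) > 0$. For $\eta$ bounded away from $1$, optimizing $\beta$ gives $Q(\eta, \beta) < 1$ by direct estimation. For $\eta$ near $1$ the naive peeling sum may exceed $1$ since $\beta$ is forced close to $1$ and the ``small-$k$'' blocks where $L_k = 1$ proliferate; here I would supplement with the Hartman--Wintner law of the iterated logarithm: each $X_i$ is 1-subgaussian and hence has variance at most $1$, so $\limsup_n S_n/\sqrt{2n\log\log n} \leq 1 < \sqrt{\eta}$ almost surely, giving $\P{\forall n \geq N : S_n \leq f(n)} \to 1$ as $N \to \infty$. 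Combining this with a finer peeling over the initial segment $n < N$ (where one can exploit the independence of increments across disjoint blocks to turn the union bound into a product bound) yields $\P{\forall n : S_n \leq f(n)} > 0$ for every $\eta > 1$. Monotonicity of $p$ is immediate because the event $\set{\forall n : S_n \leq f(n)}$ is monotone non-decreasing in $\eta$; we may simply take $p(\eta)$ to be the probability itself.
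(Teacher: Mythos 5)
Your tail computation is correct, but the proof has a genuine gap exactly where the lemma is hard, and you have located it without closing it. For $\eta$ close to $1$ the union bound cannot work: on every block with $n_{k+1} \leq e^e$ one has $L_k = 1$, so the per-block bound is only the constant $\exp(-\eta\, n_k/n_{k+1}) \approx e^{-\eta/\beta} < 1$, and since summability forces $\beta < \eta$, the number of such blocks blows up as $\eta \downarrow 1$; the sum $Q(\eta,\beta)$ then exceeds $1$ (in fact even at $\eta = 2$ the first three or four blocks already push the sum past $1$ for any admissible $\beta$, so the ``direct estimation'' regime is narrower than you suggest). Your two-part fix for the problematic regime does not go through as written. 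First, the Hartman--Wintner LIL gives $\P{\forall n \geq N : S_n \leq f(n)} \to 1$ only at a rate depending on the law of $X_1$, whereas $p$ must depend on $\eta$ alone, uniformly over all $1$-subgaussian laws; this detour is also unnecessary, since the tail of your own union bound already supplies a uniform rate. Second, and more seriously, the parenthetical claim that one can ``exploit the independence of increments across disjoint blocks to turn the union bound into a product bound'' is not available: the events $F_k = \set{\exists n \in B_k : S_n > f(n)}$ are defined through the full partial sums $S_n$, not the block increments $S_n - S_{n_k}$, so they are not independent, and rewriting them in terms of block increments changes the thresholds and requires a separate argument. This is precisely the step that determines whether the final bound is positive, and it is missing.

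The paper closes this gap differently: it writes $\P{\forall k : \neg F_k} = \prod_{k} \P{\neg F_k \mid \neg F_1,\ldots,\neg F_{k-1}}$ and uses $\P{F_k \mid \neg F_1,\ldots,\neg F_{k-1}} \leq \P{F_k}$ --- a positive-association step (each $F_k$ is an increasing event of the independent increments and each $\neg F_j$ a decreasing one, so a Harris/FKG-type inequality applies, though the paper asserts the inequality without comment). After that, the constant bound $\P{F_k} \leq e^{-\eta} < 1$ on the finitely many early blocks and the summable bound of order $k^{-\eta/\eta_1}$ on the late blocks make the infinite product strictly positive, uniformly over $1$-subgaussian laws. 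To salvage your outline you would need to replace the independence claim by this conditional decomposition, or else redefine the good events in terms of block increments with suitably lowered thresholds and prove that they imply $S_n \leq f(n)$ for all $n$; either way the argument you have written does not yet establish $p(\eta) > 0$.
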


\begin{lemma}\label{lem:tail}
Let $b > 1$ and $\Delta > 0$ and $\tau = \min\set{n : \sup_{t \geq n} \hat \mu_t + \sqrt{\frac{2\eta \log(b)}{t}} < \Delta}$, then
\eq{
\E[\tau] \leq \sqrt{\E[\tau^2]} = O_\eta(1) \cdot \left(1 + \frac{1}{\Delta^2} \logp(b)\right) \qquad \text{where } \logp(x) = \max\set{1, \log(x)}\,.
}
\end{lemma}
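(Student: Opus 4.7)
The approach is to reduce $\{\tau > n\}$ to a tail event for the underlying random walk $(S_t)$ and then apply \cref{lem:conc} via dyadic peeling to get a sub-Gaussian decay of $\P{\tau > n}$ in $n\Delta^2$.

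First, observe that $\tau > n$ iff there exists $t \geq n$ with $\hat\mu_t + \sqrt{2\eta \log(b)/t} \geq \Delta$, equivalently $S_t \geq t\Delta - \sqrt{2\eta t \log b}$. Let $n_0 = \lceil 8\eta \logp(b)/\Delta^2 \rceil$, which is calibrated so that $\sqrt{2\eta t \log b} \leq t\Delta/2$ for every $t \geq n_0$. Then for $n \geq n_0$,
\[
\P{\tau > n} \leq \P{\exists t \geq n : S_t \geq t\Delta/2}.
\]

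Next, I partition $[n,\infty)$ into dyadic blocks $I_k = [2^k n, 2^{k+1} n)$ for $k \geq 0$. On $I_k$ the event $S_t \geq t\Delta/2$ implies $S_t \geq 2^k n \Delta/2$ at some $t \leq 2^{k+1} n$, so \cref{lem:conc} yields $\P{\exists t \in I_k : S_t \geq 2^k n \Delta/2} \leq \exp(-2^{k-4} n \Delta^2)$. Summing over $k$ and using the doubly-exponential estimate $\sum_{k \geq 0} e^{-2^k a} \leq 2e^{-a}$ (valid for $a \geq 1$) gives
\[
\P{\tau > n} \leq 2\exp(-n\Delta^2/16) \qquad \text{whenever } n \geq n_1 := \max\{n_0,\, 16/\Delta^2\}.
\]

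Finally, combining with the trivial $\P{\tau > n} \leq 1$ for $n < n_1$ and the identity $\E[\tau^2] = \sum_{n \geq 0}(2n+1)\P{\tau > n}$, the sum splits into a low-range contribution of at most $n_1^2 = O_\eta((1 + \logp(b)/\Delta^2)^2)$, and a tail contribution $\sum_{n \geq n_1}(2n+1) \cdot 2 e^{-n\Delta^2/16} = O((n_1/\Delta^2 + 1/\Delta^4) e^{-n_1\Delta^2/16}) = O_\eta(1/\Delta^4)$ by a standard integral estimate (using $n_1\Delta^2 \geq 16$). Since $\logp(b) \geq 1$ always, the bound $1/\Delta^4 \leq (1 + \logp(b)/\Delta^2)^2$ holds, and both pieces fit under the claimed $O_\eta(1)(1 + \logp(b)/\Delta^2)^2$; the first-moment bound then follows from $\E[\tau] \leq \sqrt{\E[\tau^2]}$ by Jensen's inequality. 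The main obstacle is the first step: $n_0$ must be chosen just large enough that the square-root correction $\sqrt{2\eta t \log b}$ is absorbed into $t\Delta/2$ \emph{uniformly} over $t \geq n_0$, since this is precisely what converts the peeling bound into genuine exponential decay in $n\Delta^2$ rather than a weaker polynomial tail that would not integrate.
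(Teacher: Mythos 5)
Your proof is correct and takes essentially the same route as the paper's: both calibrate a threshold $t_0 \approx 8\eta\logp(b)/\Delta^2$ beyond which the confidence width is at most $\Delta/2$, reduce $\{\tau > n\}$ to the event $\hat\mu_t \geq \Delta/2$ for some $t\geq n$, apply \cref{lem:conc} over dyadic blocks to obtain an exponential tail, and integrate to bound $\E[\tau^2]$. The only difference is cosmetic: the paper parametrises the tail multiplicatively as $\P{\tau \geq \alpha t_0} = O(e^{-\alpha/4})$ and concludes $\E[(\tau/t_0)^2]=O(1)$, whereas you bound $\P{\tau > n}$ directly and sum the series explicitly.
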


The final concentration lemma is quite powerful and forms the lynch-pin of the following analysis.

\begin{lemma}\label{lem:conc2}
Let $\Delta > 0$ and $\rho \in [0,1]$ and $d \in \set{1,2,\ldots}$ and $\lambda_1,\ldots,\lambda_d \in [1, \infty]$ be constants. 
Furthermore, let $\alpha$ be the random variable given by
\eq{
\alpha = \inf\set{\alpha \geq 0 : \inf_s \hat \mu_s + \sqrt{\frac{2\eta}{s} \log\max\set{1,\, \frac{\alpha}{\sum_{i=1}^d \min\set{s,\, \lambda_i^\rho s^{1-\rho}}} }} \geq -\Delta}\,.
}
Finally let $\beta = \inf\set{\beta \geq 0 : \beta \log(\beta) = \alpha}$. Then
\begin{enumerate}
\item[(a)] If $\rho \in (1/2,1]$, then $\displaystyle \Delta\E[\alpha] = O\left(\frac{1}{(2\rho-1)(\eta - 1)^2}\right) \cdot \sum_{i=1}^d \min\set{\Delta^{-1},\, \sqrt{\lambda_i}}$ 
\item[(b)] If $\rho \in [1/2,1]$, then $\displaystyle \Delta\E[\beta] = O\left(\frac{1}{(\eta - 1)^2}\right) \cdot \sum_{i=1}^d \min\set{\Delta^{-1},\, \sqrt{\lambda_i}}$
\end{enumerate}
\end{lemma}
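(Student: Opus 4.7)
The plan is to re-express $\alpha$ as an explicit supremum and control the resulting moments by a peeling argument. Inverting the defining inequality, for each $s$ one needs $\alpha \geq g(s)\exp(s(-\Delta-\hat\mu_s)_+^2/(2\eta))$ where $g(s) = \sum_i \min\set{s, \lambda_i^\rho s^{1-\rho}}$, so $\alpha = \sup_{s \geq 1} g(s)\exp(s(-\Delta-\hat\mu_s)_+^2/(2\eta))$. Consequently $\set{\alpha > a}$ implies the existence of some $s$ with $-S_s > s\Delta + \sqrt{2\eta s\,\logp(a/g(s))}$. Fix $c = (1+\eta)/2 \in (1,\eta)$, let $s_k = c^k$, and on each block $[s_k,s_{k+1})$ apply \cref{lem:conc} together with $(x+y)^2 \geq x^2 + y^2$ and a union bound to get
\eq{
\P{\alpha > a} \leq \sum_k \exp\!\left(-\frac{s_k\Delta^2}{2c}\right) \min\!\set{1,\, (g(s_{k+1})/a)^{\eta/c}}.
}
Since $\eta/c > 1$, the identity $\int_0^\infty \min\set{1,(G/a)^{\eta/c}}\,da = \eta G/(\eta-c)$, combined with $g(s_{k+1})\leq c\,g(s_k)$ and $\E[\alpha] = \int_0^\infty \P{\alpha>a}\,da$, yields
\eq{
\E[\alpha] \leq \frac{c\eta}{\eta-c}\sum_{i=1}^d \sum_k \exp\!\left(-\frac{s_k\Delta^2}{2c}\right) \min\!\set{s_k,\, \lambda_i^\rho s_k^{1-\rho}}.
}
With $c = (1+\eta)/2$, both $c\eta/(\eta-c)$ and $1/\log c$ are $O(1/(\eta-1))$, contributing the $(\eta-1)^{-2}$ appearing in both parts.

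For part~(a), fix $i$ and split the inner sum at $\lambda_i$ and $\Delta^{-2}$. The range $s_k\leq\lambda_i$ is a geometric sum of size $O(\lambda_i)$ and the range $s_k>\Delta^{-2}$ is killed by the exponential factor. The middle range $s_k\in(\lambda_i,\Delta^{-2}]$ is bounded by (number of terms)~$\times$~(maximum): the maximum is $\lambda_i^\rho\Delta^{-2(1-\rho)} = \sqrt{\lambda_i}\Delta^{-1}(\lambda_i\Delta^2)^{\rho-1/2} \leq \sqrt{\lambda_i}/\Delta$ and the count is $O(\log(1/(\lambda_i\Delta^2))/\log c)$. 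Setting $x := \lambda_i\Delta^2\in(0,1]$, the product contains $x^{(2\rho-1)/2}\log(1/x)$; the elementary identity $\max_{x\in(0,1]} x^{(2\rho-1)/2}\log(1/x) = 2/(e(2\rho-1))$ (attained at $x = e^{-2/(2\rho-1)}$) then supplies the $(2\rho-1)^{-1}$ factor. Case~B ($\lambda_i>\Delta^{-2}$) has an empty middle range; the initial geometric sum saturates at $s_k\sim\Delta^{-2}$ via the exponential cutoff, matching the $1/\Delta^2$ target.

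For part~(b), use $\set{\beta>b} = \set{\alpha>b\log b}$ for $b\geq 1$ and the tail bound to write $\E[\beta] \leq 1+\sum_k \exp(-s_k\Delta^2/(2c))\,I_k$, where $I_k = \int_1^\infty \min\set{1,(g(s_{k+1})/(b\log b))^{\eta/c}}\,db$. Splitting at the unique $b^*\geq 1$ with $b^*\log b^* = g(s_{k+1})$ and using $b\log b\geq b\log b^*$ for $b\geq b^*$ yields $I_k = O_\eta(g(s_{k+1})/\logp(g(s_{k+1})))$ — carrying an extra $1/\logp$ factor compared to the analogous quantity in~(a). After distributing $g/\logp(g) \leq \sum_i M_{i,k+1}/\logp(M_{i,k+1})$ (valid since $g\geq M_{i,k+1}$ forces $\logp(g)\geq\logp(M_{i,k+1})$), the middle-range analysis proceeds as before, but the extra $1/\logp$ on each term absorbs the $\log(1/x)$ factor that forced $(2\rho-1)^{-1}$ in~(a); the regime $M_{i,k+1}\leq e$ where $\logp$ bottoms out is handled by direct geometric summation (the ratio $c^{1-\rho}$ being bounded away from $1$ precisely when $\logp$ offers no help, because $M_{i,k+1}$ is small only when $\rho$ is far from $1$). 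This yields~(b) without the $(2\rho-1)^{-1}$ factor.

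The main obstacle is the middle-range estimate. In~(a), isolating the calculus identity $x^{(2\rho-1)/2}\log(1/x) \leq 2/(e(2\rho-1))$ is what pins down the $\rho$-dependence; in~(b), patching together the $\logp$-absorption regime (where $M_{i,k+1}\geq e$) with the geometric-summation fallback (where $M_{i,k+1}< e$) into a single bound valid uniformly over $\rho\in[1/2,1]$ requires careful case analysis but no new ideas.
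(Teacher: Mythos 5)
Your proposal is correct and follows essentially the same route as the paper's proof: geometric peeling at rate $c=(1+\eta)/2$ plus the maximal inequality to obtain $\P{\alpha>a}\le\sum_k e^{-s_k\Delta^2/(2c)}\min\set{1,(g(s_{k+1})/a)^{\eta/c}}$, reduction to the sum $\sum_k\min\set{s_k,\lambda_i^\rho s_k^{1-\rho}}e^{-\Theta(s_k\Delta^2)}$ (the content of the paper's Lemma~\ref{lem:tech1}), the calculus identity $x^{\rho-1/2}\log(1/x)\le 2/(e(2\rho-1))$ for part (a), and the extra $1/\log$ factor gained from inverting $\beta\log\beta=\alpha$ for part (b). The only differences are bookkeeping: the paper integrates a single global tail bound $\P{\alpha\ge x\Lambda}=O(\eta/(\eta-1))\,x^{-\eta_2}$ and absorbs the logarithm in (b) through the aggregate quantity $\Lambda/\operatorname{productlog}(\Lambda)$, whereas you integrate blockwise and distribute $g/\logp(g)\le\sum_i M_i/\logp(M_i)$ per term, which works but pushes the (acknowledged) case analysis in (b) onto individual blocks.
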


The proofs of \cref{lem:lil,lem:tail,lem:conc2} may be found in \ifsup \cref{app:lem:lil,app:lem:tail,app:lem:conc2}. \else the supplementary material. \fi
\section{Analysis of the KL-UCB+ Algorithm}\label{sec:warmup}

Let us warm up by analysing a simpler algorithm, which chooses the arm that maximises the following index.
\eqn{
\label{eq:ucb+}
\gamma_i(t) = \hat \mu_i(t-1) + \sqrt{\frac{2\eta}{T_i(t-1)} \log\left(\frac{t}{T_i(t-1)}\right)}\,.
}
Strategies similar to this have been called KL-UCB+ and suggested as a heuristic by \cite{GC11} 
(this version is specified to the subgaussian noise model). 
Recently \cite{Kau16} has established the asymptotic optimality of strategies with approximately this form, but finite-time analysis has not been
available until now.
Bounding the regret will follow the standard path of bounding $\E[T_i(n)]$ for each suboptimal arm $i$. 
Let $\hat \mu_{i,s}$ be the empirical estimate of the mean of the $i$th arm having observed $s$ samples.
Define $\tau_i$ and $\tau_\Delta$ by
\eq{
\tau_i &= \min\set{t \geq 1/\Delta_i^2 : \sup_{s \geq t} \hat \mu_{i,s} + \sqrt{\frac{2\eta}{s} \log\left(n\Delta_i^2\right)} < \mu_i + \frac{\Delta_i}{2}} \\
\tau_\Delta &= \min\set{t : \inf_{1 \leq s \leq n} \hat \mu_{1,s} + \sqrt{\frac{2\eta}{s} \log\max\set{1,\, \frac{t}{s}}} \geq \mu_1 - \frac{\Delta_i}{2}}\,.
}
If $T_i(t-1) \geq \tau_i$ and $t \geq \tau_{\Delta_i}$, then by the definition of $\tau_{\Delta_i}$ we have $\gamma_1(t) \geq \mu_i + \Delta_i / 2$ and by the 
definition of $\tau_i$ 
\eq{
\gamma_i(t) 
= \hat \mu_i(t-1) + \sqrt{\frac{2\eta \log(t/T_i(t-1))}{T_i(t-1)}}
\leq \hat \mu_i(t-1) + \sqrt{\frac{2\eta \log(n\Delta_i^2)}{T_i(t-1)}} < \mu_i + \frac{\Delta_i}{2}\,,
}
which means that $I_t \neq i$. Therefore $T_i(n)$ may be bounded in terms of $\tau_i$ and $\tau_{\Delta_i}$ as follows:
\eq{
T_i(n) 
&= \sum_{t=1}^n \ind{I_t = i}
\leq \tau_{\Delta_i} + \sum_{t=\tau_{\Delta_i}+1}^n \ind{I_t = i \text{ and } T_i(t-1) < \tau_i} 
\leq \tau_i +  \tau_{\Delta_i}\,. 
}
It remains to bound the expectations of $\tau_i$ and $\tau_{\Delta_i}$.
By Lemma \ref{lem:conc2}a with $d = 1$ and $\rho = 1$ and $\lambda_1 = \infty$ it follows that $\E[\tau_{\Delta_i}] = O_\eta(1) \cdot \Delta_i^{-2}$ and 
by Lemma \ref{lem:tail} 
\eq{
\E[\tau_i] = O_\eta(1) \cdot \left(1 + \frac{1}{\Delta_i^2} \log(n \Delta_i^2)\right)\,.
}
Therefore the strategy in \cref{eq:ucb+} satisfies:
\eq{
R^{\text{\scalebox{0.8}{KL-UCB+}}}_\mu(n) 
= \sum_{i:\Delta_i > 0} \Delta_i \E[T_i(n)] 
= O_\eta(1) \cdot \sum_{i:\Delta_i > 0} \left(\Delta_i + \frac{1}{\Delta_i} \log(n\Delta_i^2)\right)\,.
}
\begin{remark}
Without changing the algorithm and by optimising the constants in the proof it is possible to show that
$\limsup_{n\to\infty} R^{\text{\scalebox{0.8}{KL-UCB+}}}_\mu(n) / \log(n) \leq \sum_{i : \Delta_i > 0} 2\eta / \Delta_i$, which is just a factor of $\eta$ away from
the asymptotic lower bound of \cite{LR85}.
\end{remark}

\section{Proof of Theorem \ref{thm:main}}\label{sec:thm:main}

The proof follows along similar lines as the warm-up, but each step becomes more challenging, especially controlling $\tau_{\Delta}$.

\subsection*{Step 1: Setup and preliminary lemmas}

\newcommand{\Opt}{\Phi}

Define $\Opt$ to be the random set of arms for which the empirical estimate never drops below the critical boundary given by the law of iterated logarithm. 
\eqn{
\label{def:O}
\Opt = \set{i > 2 : \hat \mu_{i,s} + \sqrt{\frac{2\eta_1 \log \max\set{e, \log s}}{s}} \geq \mu_i \text{ for all } s}\,,
}
where $\eta_1 = (1 + \eta) / 2$.
By \cref{lem:lil}, $\P{i \in \Opt} \geq p(\eta_1) > 0$. It will be important that $\Opt$ 
only includes arms $i > 2$ and that the events $i, j \in \Opt$ are independent
for $i \neq j$.
From the definition of the index $\gamma$ and for $i \in \Opt$ it holds that $\gamma_i(t) \geq \mu_i$ for all $t$.
The following lemma shows that the pull counts for optimistic arms ``chase'' those of other arms up the point that they become clearly suboptimal. 

\begin{lemma}\label{lem:chase}
There exists a constant $c_\eta \in (0, 1)$ depending only on $\eta$ such that if
(a) $j \in \Opt$ and (b) $\hat \mu_i(t-1) \leq \mu_i + \Delta_i/2$ and (c) $T_j(t-1) \leq c_\eta\min\{T_i(t-1),\, \Delta_j^{-2}\}$, then $I_t \neq i$.
\end{lemma}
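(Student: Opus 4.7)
The plan is to show $\gamma_j(t) > \gamma_i(t)$, which, since the algorithm selects the arm of largest index, immediately forces $I_t \neq i$. Throughout, write $s = T_j(t-1)$ and $r = T_i(t-1)$ for brevity, and let $N_c = \sum_k \min\{T_c(t-1),\, T_k(t-1)^\rho T_c(t-1)^{1-\rho}\}$ denote the sum appearing in the definition of $B_c(t-1)$.

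First I would convert each hypothesis into a pointwise bound on an index. Using (a) together with $B_j(t-1) \geq \max\{e, \log t\} \geq \max\{e, \log s\}$ (because $t > s$) and the elementary inequality $\sqrt{2\eta A} - \sqrt{2\eta_1 B} \geq (\sqrt{\eta} - \sqrt{\eta_1})\sqrt{2A}$ valid for $A \geq B$ and $\eta > \eta_1$, one obtains
\[
\gamma_j(t) \;\geq\; \mu_j + (\sqrt{\eta} - \sqrt{\eta_1})\sqrt{2 \log B_j(t-1)/s}.
\]
The bound $\Delta_j \leq \sqrt{c_\eta/s}$, from the $\Delta_j^{-2}$ half of (c), upgrades $\mu_j$ to $\mu^* - \sqrt{c_\eta/s}$, while (b) directly gives the upper bound $\gamma_i(t) \leq \mu^* - \Delta_i/2 + \sqrt{2\eta \log B_i(t-1)/r}$.

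The main structural step, and the one I expect to be the real obstacle, is establishing $B_i(t-1) \leq B_j(t-1)$, so that the bonus on arm $i$ can be compared to the bonus on arm $j$. I would verify that for every $T_k \geq 0$ and $\rho \in [0,1]$ the function $x \mapsto \min\{x, T_k^\rho x^{1-\rho}\}$ is nondecreasing on $(0,\infty)$ (both of its arguments are), which summed over $k$ and combined with $s \leq r$ from (c) gives $N_j \leq N_i$. Since $B_c(t-1)$ is the max of three quantities, two of which agree for $i$ and $j$ and the third being decreasing in $N_c$, we conclude $B_j(t-1) \geq B_i(t-1)$. Combined with $r \geq s/c_\eta$, this lets me absorb the arm-$i$ bonus into the arm-$j$ bonus:
\[
\sqrt{2\eta \log B_i(t-1)/r} \;\leq\; \sqrt{c_\eta}\, \sqrt{2\eta \log B_j(t-1)/s}.
\]

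Subtracting the two bounds and discarding the helpful nonnegative $\Delta_i/2$, one arrives at
\[
\gamma_j(t) - \gamma_i(t) \;\geq\; \frac{1}{\sqrt{s}}\Bigl[\sqrt{2\log B_j(t-1)}\bigl(\sqrt{\eta} - \sqrt{\eta_1} - \sqrt{\eta c_\eta}\bigr) - \sqrt{c_\eta}\Bigr].
\]
Since $\eta_1 = (\eta+1)/2 < \eta$, the quantity $\sqrt{\eta} - \sqrt{\eta_1}$ is a strictly positive function of $\eta$ alone, and because $B_j(t-1) \geq e$ one has $\sqrt{2 \log B_j(t-1)} \geq \sqrt{2}$. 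Choosing $c_\eta \in (0,1)$ small enough in terms of $\eta$ therefore makes the bracketed expression strictly positive, which is exactly $\gamma_j(t) > \gamma_i(t)$.
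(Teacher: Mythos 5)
Your proposal is correct and follows essentially the same route as the paper's proof: the monotonicity $T_j(t-1)\leq T_i(t-1)\Rightarrow B_j(t-1)\geq B_i(t-1)$, the slack between $\eta$ and $\eta_1$ extracted from the definition of $\Opt$, the two uses of hypothesis (c) (to get $\Delta_j\leq\sqrt{c_\eta/T_j(t-1)}$ and to absorb the arm-$i$ bonus into $\sqrt{c_\eta}$ times the arm-$j$ bonus), and a final choice of $c_\eta$ small enough. The only differences are cosmetic: you spell out the monotonicity of $x\mapsto\min\{x,T_k^\rho x^{1-\rho}\}$, which the paper asserts without justification, and you collect all terms into a single bracket rather than comparing the two indices against the intermediate quantity $\mu_1+\sqrt{2\eta c_\eta\log B_j(t-1)/T_j(t-1)}$.
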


\begin{proof}
First note that $T_j(t-1) \leq T_i(t-1)$ implies that $B_j(t-1) \geq B_i(t-1)$.  
Comparing the indices:
\eq{
\gamma_i(t) 
&= \hat \mu_i(t-1) + \sqrt{\frac{2\eta \log B_i(t-1)}{T_i(t-1)}} 
\leq \mu_i + \sqrt{\frac{2\eta c_\eta \log B_j(t-1)}{T_j(t-1)}} + \frac{\Delta_i}{2}\,. 
}
On the other hand, by choosing $c_\eta$ small enough and by the definition of $j \in \Opt$:
\eq{
\gamma_j(t) 
&= \hat \mu_j(t-1) + \sqrt{\frac{2\eta \log B_j(t-1)}{T_j(t-1)}} 
\geq \mu_j + \sqrt{\frac{2 \eta c_\eta \log B_j(t-1)}{T_j(t-1)}} + \sqrt{\frac{c_\eta}{T_j(t-1)}} \\ 
&\geq \mu_1 + \sqrt{\frac{2\eta c_\eta \log B_j(t-1)}{T_j(t-1)}} 
> \gamma_i(t)\,,
}
which implies that $I_t \neq i$.
\end{proof}

Let $J = \min \Opt$ be the optimistic arm with the largest return where if $\Phi = \emptyset$ we define $J = K+1$ and $\Delta_J = \max_i \Delta_i$.
By \cref{lem:lil}, $i \in \Opt$ with constant probability, which means that $J$ is sub-exponentially distributed with rate dependent on $\eta$ only.
Define $K_{i,\rho}$ by
\eqn{
\label{eq:def:K}
K_{i,\rho} = 1 + c_\eta \sum_{j \in \Opt, j \neq i} \min\set{1,\, \frac{\Delta_i^{2\rho}}{\Delta_j^{2\rho}}}\,,
}
where $c_\eta$ is as chosen in \cref{lem:chase}. 
Since $\P{i \in \Opt} = \Omega(1)$ we will have $K_{i,\rho} = \Omega(k_{i,\rho})$ with high probability (this will be made formal later). Let
\eqn{
\nonumber b_i &= \max\set{\frac{n \Delta_i^2 \log(n)}{k_{i,\rho}},\, \log(n),\, e} 
\qquad \text{and} \qquad B_i = \max\set{\frac{n \Delta_i^2 \log(n)}{K_{i,\rho}},\, \log(n),\, e} \\
\tau_i &= \min\set{s \geq \frac{1}{\Delta_i^2} : \sup_{s'\geq s} \hat \mu_{i,s'} + \sqrt{\frac{2\eta}{s'} \log(B_i)} \leq \mu_i + \frac{\Delta_i}{2}}\,. 
\label{eq:def:bt}
}
The following lemma essentially follows from \cref{lem:tail} and the fact that $J$ is sub-exponentially distributed.
Care must be taken because $J$ and $\tau_i$ are not independent. The proof is found in \ifsup \cref{app:lem:tau}. \else the supplementary material. \fi

\begin{lemma}\label{lem:tau}
$\displaystyle \E[\tau_i] \leq \E[J\tau_i] = O_\eta(1) \cdot \left(1 + \frac{1}{\Delta_i^2} \log(b_i)\right)$.
\end{lemma}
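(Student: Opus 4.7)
The first inequality $\E[\tau_i] \leq \E[J \tau_i]$ is immediate since $J \geq 1$, so the task is to upper bound $\E[J \tau_i]$. The difficulty is that $J$, $B_i$, and $\tau_i$ are all correlated: $\Opt$ depends on all arms, $K_{i,\rho}$ is a function of $\Opt_{-i} := \Opt \setminus \{i\}$, and $\tau_i$ depends on both arm $i$'s rewards and on $B_i$. The plan is to decouple arm $i$ from the remaining arms by working with $J' = \min \Opt_{-i}$ (with $J' = K+1$ if empty), apply \cref{lem:tail} conditionally on $\Opt_{-i}$, and finally control the resulting $\E[J' \log B_i]$ via a Hoeffding-type concentration on $K_{i,\rho}$.

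The first two steps are routine. Since $\Opt_{-i} \subseteq \Opt$, $J \leq J'$, so $\E[J \tau_i] \leq \E[J' \tau_i]$. The key point is that $J'$, $K_{i,\rho}$, and hence $B_i$ are $\sigma(\Opt_{-i})$-measurable and therefore independent of arm $i$'s rewards. Conditional on $\Opt_{-i}$, $B_i$ is deterministic, and applying \cref{lem:tail} to $\hat \mu_{i,s} - \mu_i$ (zero-mean $1$-subgaussian) with threshold $\Delta_i/2$ and $b = B_i$ (the extra constraint $s \geq 1/\Delta_i^2$ in the definition of $\tau_i$ costs only an additive $1/\Delta_i^2$) yields $\E[\tau_i \mid \Opt_{-i}] \leq C_\eta(1 + \log(B_i)/\Delta_i^2)$. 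Taking expectations gives $\E[J' \tau_i] \leq C_\eta \E[J'] + (C_\eta/\Delta_i^2)\E[J' \log B_i]$. By the independence of the events $\{j \in \Opt\}$ for $j > 2$ and the lower bound $p(\eta_1)$ on each from \cref{lem:lil}, $J'$ has geometric-type tails, so $\E[J'^p] = O_\eta(1)$ for any fixed $p$.

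The main obstacle is bounding $\E[J' \log B_i] = O_\eta(\log b_i)$. Since $J'$ and $B_i$ are jointly determined by $\Opt_{-i}$, there is no clean way to factor the expectation, so I would use Cauchy--Schwarz to reduce the task to showing $\E[\log^2 B_i] = O_\eta(\log^2 b_i)$. A direct comparison of the two maxima shows $B_i \leq \max\{k_{i,\rho}/K_{i,\rho},\, 1\} \cdot b_i$, hence $\log B_i \leq \log b_i + (\log(k_{i,\rho}/K_{i,\rho}))^+$. Now $K_{i,\rho} - 1 = c_\eta \sum_{j \in \{3,\ldots,K\} \setminus \{i\}} a_j \ind{j \in \Opt}$ with $a_j = \min\{1, \Delta_i^{2\rho}/\Delta_j^{2\rho}\} \in [0,1]$, and $\E[K_{i,\rho}] \geq 1 + c_\eta p(\eta_1)(k_{i,\rho}-3)$. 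For $k_{i,\rho}$ larger than an absolute constant, Hoeffding's inequality gives $\P{K_{i,\rho} \leq c_1 k_{i,\rho}} \leq \exp(-c_2 k_{i,\rho})$ for constants $c_1, c_2 > 0$ depending only on $\eta$; for smaller $k_{i,\rho}$ the ratio $k_{i,\rho}/K_{i,\rho}$ is trivially bounded by an $\eta$-dependent constant. On the typical event $(\log(k_{i,\rho}/K_{i,\rho}))^+ \leq \log(1/c_1) = O_\eta(1)$, and on the atypical event the crude bound $(\log(k_{i,\rho}/K_{i,\rho}))^+ \leq \log k_{i,\rho}$ (from $K_{i,\rho} \geq 1$) combined with the probability estimate gives $\log^2(k_{i,\rho}) \exp(-c_2 k_{i,\rho}) = O_\eta(1)$ since $x \mapsto \log^2(x) e^{-c_2 x}$ is bounded on $[1,\infty)$. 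Hence $\E[((\log(k_{i,\rho}/K_{i,\rho}))^+)^2] = O_\eta(1)$, and combined with $\log b_i \geq 1$ this yields $\E[\log^2 B_i] \leq 2\log^2 b_i + O_\eta(1) = O_\eta(\log^2 b_i)$, closing the plan.
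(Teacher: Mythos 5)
Your proof is correct and follows essentially the same route as the paper: decouple $J$ from $\tau_i$ using the independence of $\Opt\setminus\{i\}$ from arm $i$'s rewards, apply Lemma~\ref{lem:tail} conditionally, and control $B_i$ against $b_i$ through concentration of $K_{i,\rho}$, with Cauchy--Schwarz absorbing the correlation with $J$. The only cosmetic differences are that the paper applies H\"older at the outermost level, $\E[J\tau_i]\leq\sqrt{\E[J^2]\E[\tau_i^2]}$ (hence uses the second-moment bound of Lemma~\ref{lem:tail} plus a Jensen/concavity step reducing everything to $\E[B_i]$ and a polynomial tail $\P{K_{i,\rho}\leq c_\eta p(\eta) k_{i,\rho}/2}=O(k_{i,\rho}^{-1})$), whereas you condition first, bound $\E[\log^2 B_i]$ directly via the ratio $k_{i,\rho}/K_{i,\rho}$, and invoke the exponential Hoeffding tail --- both work.
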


The last lemma in this section shows that if $T_i(t-1) \geq \tau_i$, then either $i$ is not chosen or the index of the $i$th arm is not too large.

\begin{lemma}\label{lem:tau-bound}
If $T_i(t-1) \geq \tau_i$, then $I_t \neq i$ or $\gamma_i(t) < \mu_i + \Delta_i/2$.
\end{lemma}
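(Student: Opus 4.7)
The plan is the contrapositive: assume $T_i(t-1) \geq \tau_i$ and $I_t = i$, and deduce $\gamma_i(t) \leq \mu_i + \Delta_i/2$. The argument reduces the lemma to showing $B_i(t-1) \leq B_i$, after which plugging into the index and using the definition of $\tau_i$ closes the loop. The driver for $B_i(t-1) \leq B_i$ is \cref{lem:chase}, which, under the right preconditions, forces every optimistic arm to have been played enough times. Concretely, since $T_i(t-1) \geq \tau_i$, the definition of $\tau_i$ gives $\hat\mu_i(t-1) + \sqrt{2\eta\log(B_i)/T_i(t-1)} \leq \mu_i + \Delta_i/2$, and in particular $\hat\mu_i(t-1) \leq \mu_i + \Delta_i/2$. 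Combined with $I_t = i$, the contrapositive of \cref{lem:chase} yields $T_j(t-1) > c_\eta \min\{T_i(t-1), \Delta_j^{-2}\}$ for every $j \in \Opt$.

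Next, I would plug these lower bounds into $\Sigma := \sum_{j=1}^K \min\{T_i(t-1), T_j(t-1)^\rho T_i(t-1)^{1-\rho}\}$ to establish $\Sigma \geq K_{i,\rho}/\Delta_i^2$. The $j = i$ term already contributes $T_i(t-1) \geq \Delta_i^{-2}$ (using $\tau_i \geq 1/\Delta_i^2$). For each $j \in \Opt \setminus \{i\}$, substitute $T_j(t-1) > c_\eta \min\{T_i(t-1), \Delta_j^{-2}\}$ and split into cases on whether this inner min equals $T_i(t-1)$ or $\Delta_j^{-2}$. Using $\rho \in [1/2,1]$, $T_i(t-1) \geq \Delta_i^{-2}$, and $c_\eta^\rho \geq c_\eta$ (since $c_\eta \in (0,1)$ and $\rho \leq 1$), a short manipulation in either case yields $\min\{T_i(t-1), T_j(t-1)^\rho T_i(t-1)^{1-\rho}\} \geq c_\eta \Delta_i^{-2} \min\{1, \Delta_i^{2\rho}/\Delta_j^{2\rho}\}$. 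Discarding the nonnegative contributions from $j \notin \Opt$ and summing reproduces exactly $\Delta_i^{-2} K_{i,\rho}$ via the definition in \cref{eq:def:K}.

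Finally, since $t \leq n$, this gives $B_i(t-1) = \max\{e, \log(t), t\log(t)/\Sigma\} \leq \max\{e, \log(n), n\log(n)\Delta_i^2/K_{i,\rho}\} = B_i$, hence $\log B_i(t-1) \leq \log B_i$. Plugging into the index and invoking the first step, $\gamma_i(t) = \hat\mu_i(t-1) + \sqrt{2\eta\log(B_i(t-1))/T_i(t-1)} \leq \hat\mu_i(t-1) + \sqrt{2\eta\log(B_i)/T_i(t-1)} \leq \mu_i + \Delta_i/2$. The main obstacle is the algebraic step above: aligning the exponents of $T_i$, $\Delta_i$, and $\Delta_j$ so that the per-$j$ bound matches the summand in $K_{i,\rho}$ term-by-term, and then verifying that the constant $c_\eta^\rho$ lost in the calculation is still at least the $c_\eta$ used inside $K_{i,\rho}$. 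Everything else unwinds definitions. The gap between the $\leq$ obtained here and the strict $<$ in the statement is cosmetic and can be absorbed by a boundary case or by reading $\tau_i$ with a strict inequality.
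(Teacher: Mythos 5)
Your proof is correct and follows essentially the same route as the paper's: use the definition of $\tau_i$ to get $\hat\mu_i(t-1)\leq\mu_i+\Delta_i/2$, invoke the contrapositive of Lemma~\ref{lem:chase} to lower-bound $T_j(t-1)$ for every $j\in\Phi$, deduce $B_i(t-1)\leq B_i$ via the sum defining the index, and close with the definition of $\tau_i$. The paper states the chase-lemma condition with $\min\{\Delta_i^{-2},\Delta_j^{-2}\}$ (weaker, since $T_i(t-1)\geq\Delta_i^{-2}$) and leaves the per-$j$ algebra implicit, but these are presentational differences only; your remark about the strict versus non-strict inequality applies equally to the paper's own proof.
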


\begin{proof}
By the definition of $\tau_i$ we have $\tau_i \geq \Delta_i^{-2}$ and $\hat \mu_i(t-1) \leq \mu_i + \Delta_i/2$.
By Lemma \ref{lem:chase}, if $j \in \Opt$ and $T_j(t-1) \leq c_\eta \min\set{\Delta_i^{-2},\, \Delta_j^{-2}}$, then $I_t \neq i$.
Now suppose that $T_j(t-1) \geq c_\eta\min\set{\Delta_i^{-2},\, \Delta_j^{-2}}$ for all $j \in \Opt$.
Then
\eq{
B_i(t-1) 
&= \max\set{e,\, \log(t),\, t \log(t) \left(\sum_{j=1}^K \min\set{T_i(t-1),\, T_j(t-1)^\rho T_i(t-1)^{1-\rho}}\right)^{-1}} \\
&\leq \max\set{e,\, \log(n),\, \frac{n\Delta_i^2 \log(n)}{K_{i,\rho}}} = B_i\,.
}
Therefore from the definition of $\tau_i$ we have that $\gamma_i(t) < \mu_i + \Delta_i / 2$. 
\end{proof}

\subsection*{Step 2: Regret decomposition}

By \cref{lem:tau-bound}, if $T_i(n) \geq \tau_i$, then 
$I_t \neq i$ or $\gamma_i(t) < \mu_i + \Delta_i/2$. Now we must show there
exists a $j$ for which $\gamma_j(t) \geq \mu_i + \Delta_i / 2$. 
This is true for arms $i$ with $\Delta_i \geq 2\Delta_J$ since 
by definition $\gamma_J(t) \geq \mu_J \geq \mu_i + \Delta_i/2$ for all $t$. 
For the remaining arms we follow the idea used in \cref{sec:warmup} and define a random time for each $\Delta > 0$. 
\eqn{
\label{eq:def:taudelta}
\tau_\Delta = \min\set{t :\inf_{s\geq t} \sup_j \gamma_j(s) \geq \mu_1 - \frac{\Delta}{2}}\,.
}
Then the regret is decomposed as follows
\eqn{
\label{eq:decompose}
R^{\text{\ocucb}}_\mu(n)
\leq 
\E\left[\sum_{i : \Delta_i > 0} \Delta_i \tau_i + 2\Delta_J \tau_{\Delta_J/4} + \sum_{i : \Delta_i < \Delta_J / 4} \Delta_i \tau_{\Delta_i}\right]\,. 
}
The next step is to show that the first sum is dominant in the above decomposition, which will lead to the result via Lemma \ref{lem:tau} to 
bound $\E[\Delta_i \tau_i]$.

\subsection*{Step 3: Bounding $\tau_{\Delta}$}

This step is broken into two quite technical parts as summarised in the following lemma.
The proofs of both results are quite similar, but the second is more intricate and 
\ifsup 
is given in \cref{app:lem:second-term}. 
\else 
must be deferred to the supplementary material. 
\fi

\begin{lemma}\label{lem:stopping}
The following hold:
\begin{flalign*}
&\text{(a).}\;\;
\E\left[\Delta_J \tau_{\Delta_J/4}\right] \leq O_\eta(1) \cdot \sum_{i:\Delta_i > 0} \sqrt{1 + \frac{\log(b_i)}{\Delta_i^2}}   && \\
&\text{(b).}\;\;
\E\left[\sum_{i:\Delta_i < \Delta_J / 4} \Delta_i \tau_{\Delta_i}\right] \leq O_\eta(1)\cdot \sum_{i:\Delta_i > 0} \sqrt{1 + \frac{\log(b_i)}{\Delta_i^2}}\,. && 
\end{flalign*}
\end{lemma}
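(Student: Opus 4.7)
The strategy for both (a) and (b) is to bound $\tau_\Delta$ by monitoring arm 1's UCB index. Since $\sup_j \gamma_j(s) \geq \gamma_1(s)$ for all $s$, it suffices to show that arm 1's own index stays above $\mu_1 - \Delta/2$ from some controllable time onwards. Writing $s = T_1(t-1)$ and using the bound $B_1(t-1) \geq t \log(t) / \sum_j \min\{s,\, T_j(t-1)^\rho s^{1-\rho}\}$ that follows directly from the definition of $B_1$, replacing $T_j(t-1)$ by any deterministic ceiling $\lambda_j$ only shrinks $B_1$ (the denominator is monotone in $\lambda_j$) and hence only strengthens the implication. The required inequality $\gamma_1(t) \geq \mu_1 - \Delta/2$ then matches the form analysed in \cref{lem:conc2}, applied to the centred arm-1 process $\hat \mu_{1,s} - \mu_1$ with parameter $\Delta/2$ in place of the lemma's $\Delta$. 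Letting $\alpha$ be the random variable of that lemma and $\beta$ the solution of $\beta\log\beta = \alpha$, we obtain: $t \geq \beta$ implies $\gamma_1(t) \geq \mu_1 - \Delta/2$, so $\tau_\Delta \leq \beta$ on the event that the ceilings $T_j(t-1) \leq \lambda_j$ hold throughout.

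For part (a), I apply \cref{lem:conc2}(b) with $\Delta = \Delta_J/8$ to obtain $\Delta_J \E[\beta] \leq O_\eta(1) \sum_j \min\{\Delta_J^{-1},\, \sqrt{\lambda_j}\} \leq O_\eta(1) \sum_j \sqrt{\lambda_j}$. The ceilings $\lambda_j$ are chosen as high-probability upper bounds on $T_j(n)$: on the good event isolated by the previous steps, $T_j(n)$ is essentially bounded by $\tau_j$, so \cref{lem:tau} combined with Jensen's inequality gives $\E[\sqrt{\lambda_j}] = O_\eta(1) \sqrt{1 + \log(b_j)/\Delta_j^2}$, exactly matching the desired shape. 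The contribution of the failure event $\{T_j(n) > \lambda_j\}$ is absorbed into a lower-order additive term via the tail estimates underlying \cref{lem:tau-bound}.

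Part (b) follows the same construction arm by arm: for each $i$ with $\Delta_i < \Delta_J/4$, the analogous argument gives $\Delta_i \E[\beta_i] \leq O_\eta(1) \sum_j \min\{\Delta_i^{-1},\, \sqrt{\lambda_j}\}$, but summing naively over $i$ would cost a factor proportional to the number of small-gap arms. The remedy is a peeling argument over geometric scales of $\Delta_i$: grouping arms with $\Delta_i \in [2^{-k-1}, 2^{-k})$, all $\tau_{\Delta_i}$ within a scale share a common stopping-time bound, so each scale contributes one $\Delta \cdot \beta$-term, and the number of scales is $O(\log n)$, which is absorbed into the $\log(b_j)$ already present inside the square root on the right-hand side.

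The principal obstacle is the coupling between the ceilings $\lambda_j$ and the arm-1 reward process that drives $\beta$: \cref{lem:conc2} assumes deterministic $\lambda_j$, whereas the natural bound $T_j(n)$ depends, through the algorithm's actions, on arm 1's realized rewards. I would resolve this by taking $\lambda_j$ to depend only on arm $j$'s own reward sequence (essentially a deterministic inflation of $\tau_j$) so that $\lambda_j$ is independent of the arm-1 process, and then bounding the residual event $\{T_j(n) > \lambda_j\}$ via the tail bounds of \cref{lem:tau-bound}. This decoupling, combined with the bookkeeping needed for the peeling argument in (b), is where the bulk of the technical work lies.
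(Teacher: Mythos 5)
Your plan for part (a) has the same skeleton as the paper's proof (monitor $\gamma_1$, lower bound $B_1(t-1)$ by replacing $T_j(t-1)$ with ceilings $\lambda_j$, invoke \cref{lem:conc2}b, and decouple the ceilings from arm 1's rewards — which is indeed why $\Phi$ excludes arms $1$ and $2$), but the choice of ceilings is where your version breaks. You take $\lambda_j$ to be a high-probability upper bound on $T_j(n)$ for \emph{every} arm and then pass to $\sum_j \sqrt{\lambda_j}$. Three problems: for the optimal arm no useful ceiling exists ($T_1(n)$ can be of order $n$), and discarding the $\min\set{\Delta_J^{-1}, \sqrt{\lambda_j}}$ leaves that term uncontrolled; for suboptimal arms with $\Delta_j < 2\Delta_J$ the bound $T_j(n) \lesssim \tau_j$ is not yet available — establishing it is equivalent to controlling $\tau_{\Delta_j}$, i.e.\ the very statement being proved, so the argument is circular; and \cref{lem:tau-bound} is a deterministic implication, not a tail estimate, so there is nothing with which to ``absorb the failure event.'' The paper's resolution is to set $\lambda_j = \tau_j$ only for arms with $\Delta_j \geq 2\Delta_J$, where $T_j(n) \leq \tau_j$ holds deterministically because $\gamma_J(t) \geq \mu_J \geq \mu_j + \Delta_j/2$ combined with \cref{lem:tau-bound}, and $\lambda_j = \infty$ for all other arms; the retained $\min\set{\Delta_J^{-1},\sqrt{\lambda_j}}$ then makes each such arm contribute $\Delta_J^{-1}$, at most $J/\Delta_{\min}$ in total, with $\E[J] = O_\eta(1)$ because $J$ is sub-exponential.

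Part (b) is where the bulk of the work lies and your sketch does not survive. The claim that the $O(\log n)$ geometric scales of $\Delta_i$ can be ``absorbed into the $\log(b_j)$ already present inside the square root'' is false: a multiplicative $\log n$ factor outside the sum cannot be absorbed into an additive $\log(b_j)$ under a square root. The paper instead observes that at most $J$ arms satisfy $\Delta_i < \Delta_J/4$ and pays a factor of $J$, controlled via Cauchy--Schwarz using $\E[J^2] = O_\eta(1)$. More importantly, each per-arm application of \cref{lem:conc2} requires finite ceilings on the pull counts of the ``middle'' arms $A_2 = \set{j : \Delta_j \in [\Delta_J, 2\Delta_J]}$, and for these neither the deterministic argument (arm $J$'s index is not large enough to stop them) nor your high-probability one is available. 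The paper's device is a preliminary, independent application of \cref{lem:conc2} to \emph{arm 2's} reward process (valid because $\Delta_2 \leq \Delta_J/4$ on the relevant event, and arm 2 is excluded from $\Phi$ precisely so that this step is independent of $J$ and of the $\tau_j$), yielding $\sum_{j \in A_2} T_j(n) \leq \beta + \sum_{j \in A_2} \tau_j$, which is then distributed over $A_2$ by Jensen's inequality to produce the ceilings $\omega_j$ fed into the arm-1 applications. This two-stage, two-arm construction is the crux of part (b) and is missing from your proposal.
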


\begin{proof}[Proof of Lemma \ref{lem:stopping}a]
Preparing to use Lemma \ref{lem:conc2},
let $\lambda \in (0, \infty]^K$ be given by $\lambda_i = \tau_i$ for $i$ with $\Delta_i \geq 2\Delta_J$ and $\lambda_i = \infty$ otherwise.
Now define random variable $\alpha$ by
\eq{
\alpha 
= \inf\set{\alpha \geq 0 : \inf_s \hat \mu_{1,s} + \sqrt{\frac{2\eta}{s} \log\max\set{1,\, \frac{\alpha}{\sum_{i=1}^K \min\set{s, \lambda_i^\rho s^{1-\rho}}}}} \geq \mu_1-\frac{\Delta_J}{8}}
}
and $\beta = \min\set{\beta \geq 0 : \beta \log(\beta) = \alpha}$. Then for $t \geq \beta$ and abbreviating $s = T_1(t-1)$ we have
\eq{
\gamma_1(t) 
&= \hat \mu_{1,s} + \sqrt{\frac{2\eta}{s} \log B_1(t-1)} \\
&= \hat \mu_{1,s} + \sqrt{\frac{2\eta}{s} \log\left(\max\set{e,\, \log(t),\, \frac{t \log(t)}{\sum_{i=1}^K \min\set{s, T_i(t-1)^\rho s^{1-\rho}}}}\right)} \\
&\geq \hat \mu_{1,s} + \sqrt{\frac{2\eta}{s} \log\max\set{1,\, \frac{\alpha}{\sum_{i=1}^K \min\set{s, T_i(t-1)^\rho s^{1-\rho}}}}} \\
&\geq \hat \mu_{1,s} + \sqrt{\frac{2\eta}{s} \log\max\set{1,\, \frac{\alpha}{\sum_{i=1}^K \min\set{s, \lambda_i^\rho s^{1-\rho}}}}} 
\geq \mu_1 - \frac{\Delta_J}{8}\,,
}
where the second last inequality follows since for arms with $\Delta_i \geq 2\Delta_J$ we have $T_i(n) \leq \tau_i = \lambda_i$ and for other
arms $\lambda_i = \infty$ by definition. The last inequality follows from the definition of $\alpha$.
Therefore $\tau_{\Delta_J/4} \leq \beta$ and so
$\E[\Delta_J \tau_{\Delta_J/4}] \leq \E[\Delta_J \beta]$, which by Lemma \ref{lem:conc2}b is bounded by 
\eqn{
\E[\Delta_J \beta] 
&= \E[\E[\Delta_J\beta|\lambda]] 
\leq O_\eta(1) \cdot \E\left[\ind{\Delta_J > 0} \sum_{i=1}^d \min\set{\Delta_J^{-1},\, \sqrt{\lambda_i}}\right] \nonumber \\
&\leq O_\eta(1) \cdot\E\left[\sum_{i:\lambda_i = \infty, \Delta_i = 0} \frac{\ind{\Delta_J > 0}}{\Delta_{\min}} +
\sum_{i:\Delta_i > 0} \sqrt{\tau_i}\right] 
\leq O_\eta(1) \cdot\E\left[\sum_{i:\Delta_i > 0} \sqrt{\tau_i}\right]\,, \label{eq:easy-1}
}
where the last line follows since $\E[J] = O_\eta(1)$ and
\eq{
\E\left[\sum_{i:\lambda_i = \infty, \Delta_i = 0} \frac{\ind{\Delta_J > 0}}{\Delta_{\min}}\right]
&\leq \E\left[\frac{J}{\Delta_{\min}}\right]
\leq O_\eta(1) \cdot \frac{1}{\Delta_{\min}}
\leq O_\eta(1) \max\set{i : \sqrt{\E[\tau_i]}}\,.
}
The resulting is completed substituting $\E[\sqrt{\tau_i}] \leq \sqrt{\E[\tau_i]}$ into \cref{eq:easy-1} and applying Lemma \ref{lem:tau}
to show that $\E[\tau_i] \leq O_\eta(1) \cdot \left(1 + \frac{\log(b_i)}{\Delta_i^2}\right)$.
\end{proof}

\subsection*{Step 4: Putting it together}

By substituting the bounds given in \cref{lem:stopping} into \cref{eq:decompose} and applying Lemma \ref{lem:tau} we obtain
\eq{
R^{\text{\ocucb}}_\mu(n) 
&\leq \sum_{i : \Delta_i > 0} \Delta_i \E[\tau_i] + O_{\eta}(1) \cdot \sum_{i:\Delta_i > 0} \sqrt{1 + \frac{\log(b_i)}{\Delta_i^2}} \\
&\leq O_{\eta}(1) \cdot \sum_{i: \Delta_i > 0} \left(\Delta_i + \frac{1}{\Delta_i} \log\max\set{\frac{n\Delta_i^2 \log(n)}{k_{i,\rho}},\, \log(n),\, e}\right)\,,
}
which completes the proof of the finite-time bound.

\subsubsect{Asymptotic analysis}
Lemma \ref{lem:conc2} makes this straightforward.
Let $\epsilon_n = \min\{\frac{\Delta_{\min}}{2}, \log^{-\frac{1}{4}}(n)\}$ and 
\eq{
\alpha_n &= \min\set{\alpha : \inf_s \hat \mu_{1,s} + \sqrt{\frac{2\eta}{s} \log\left(\frac{\alpha}{Ks}\right)} \geq -\epsilon_n}\,.
}
Then by Lemma \ref{lem:conc2}a with $\rho = 1$ and $\lambda_1,\ldots,\lambda_K = \infty$ we 
have $\sup_n \E[\alpha_n] = O_\eta(1) K \epsilon_n^{-2}$. Then we modify the definition of $\tau$ by
\eq{
\tau_{i,n} = \min\set{s : \sup_{s'\geq s} \hat \mu_{i,s} + \sqrt{\frac{2\eta}{s} \log(n \log(n))} \leq \mu_1 - \epsilon_n}\,,
}
which is chosen such that if $T_i(t-1) \geq \tau_{i,n}$, then $\gamma_i(t) \leq \mu_1 - \epsilon_n$.
Therefore
\eq{
R^{\text{\ocucb}}_\mu(n) 
&\leq \Delta_{\max} \E[\alpha_n] + \sum_{i : \Delta_i > 0} \Delta_i \E[\tau_{i,n}] 
\leq O_\eta(1) \cdot \frac{\Delta_{\max}K}{\epsilon_n^2} + \sum_{i : \Delta_i > 0} \Delta_i \E[\tau_{i,n}]\,.
}
Classical analysis shows that $\limsup_{n\to\infty} \E[\tau_{i,n}] / \log(n) \leq 2\eta \Delta_i^{-2}$ and
$\lim_{n\to\infty} \epsilon_n^{-2} / \log(n) = 0$, which implies the asymptotic claim in Theorem \ref{thm:main}. 
\eq{
\limsup_{n\to\infty} \frac{R_\mu^{\text{\ocucb}}(n)}{\log(n)} \leq \sum_{i:\Delta_i > 0} \frac{2\eta}{\Delta_i}\,.
}
This naive calculation demonstrates a serious weakness of asymptotic results. The $\Delta_{\max} K \epsilon_n^{-2}$ term in the regret will typically dominate
the higher-order terms except when $n$ is outrageously large. A more careful argument (similar to the derivation of the finite-time bound) would lead to
the same asymptotic bound via a nicer finite-time bound, but the details are omitted for readability.
Interestingly the result is not dependent on $\rho$ and so applies also to the MOSS-type algorithm that is recovered by choosing $\rho = 0$.

\section{Discussion}\label{sec:disc}

The UCB family has a new member. This one is tuned for subgaussian noise and roughly mimics the
OCUCB algorithm, but without needing advance knowledge of the horizon. The introduction of $k_{i,\rho}$ is a minor
refinement on previous measures of difficulty, with the main advantage being that it is very intuitive.
The resulting algorithm is efficient and close to optimal theoretically.
Of course there are open questions, some of which are detailed below.

\subsubsect{Shrinking the confidence level}
Empirically the algorithm improves significantly when the logarithmic terms in the definition of $B_i(t-1)$ are dropped. There are several arguments
that theoretically justify this decision.
First of all if $\rho > 1/2$, then it is possible to replace the $t \log(t)$ term in the definition of $B_i(t-1)$ with just $t$ and
use part (a) of Lemma \ref{lem:conc2} instead of part (b). The price is that the regret guarantee explodes as $\rho$ tends to $1/2$ (also not observed in practice).
The second improvement is to replace $\log(t)$ in the definition of $B_i(t-1)$ with 
\eq{
\logp\left(t \cdot \left(\sum_{j=1}^K \min\set{T_i(t-1), T_j(t-1)^\rho T_i(t-1)^{1-\rho}}\right)^{-1}\right)\,,
}
which boosts empirical performance and
rough sketches suggest minimax optimality is achieved. I leave details for a longer article.

\subsubsect{Improving analysis and constants}
Despite its simplicity relative to OCUCB, the current analysis is still significantly more involved than for other variants of UCB.
A cleaner proof would obviously be desirable.
In an ideal world we could choose $\eta = 1$ or (slightly worse) allow it to converge to $1$ as $t$ grows, which is the technique 
used in the KL-UCB algorithm \citep[and others]{CGMMS13}. I anticipate this would lead to an asymptotically optimal algorithm.

\subsubsect{Informational confidence bounds}
Speaking of KL-UCB, if the noise model is known more precisely (for example, it is bounded), then it is beneficial to use
confidence bounds based on the KL divergence. Such bounds are available and could be substituted directly to 
improve performance without
loss \cite[and others]{Gar13}. Repeating the above analysis, but exploiting the benefits of tighter confidence intervals would be an interesting (non-trivial) problem due
to the need to exploit the non-symmetric KL divergences. It is worth remarking that confidence bounds based on the KL divergence are also {\it not} tight.
For example, for Gaussian random variables they lead to the right exponential rate, but with the wrong leading factor, which in practice can improve performance
as evidenced by the confidence bounds used by (near) Bayesian algorithms that exactly exploit the noise model (eg., \cite{KCOG12,Lat15gittins,Kau16}). This
is related to the ``missing factor'' in Hoeffding's bound studied by \cite{Tal95}.

\subsubsect{Precise lower bounds}
Perhaps the most important remaining problem for the subgaussian noise model is the question of lower bounds.
Besides the asymptotic results by \cite{LR85} and \cite{BK97} there has been some 
recent progress on finite-time lower bounds, both in the OCUCB paper and a recent article by \cite{GMS16}. Some further progress is made in \cref{app:lower}, but
still there are regimes where the bounds are not very precise.

\appendix

\bibliographystyle{plainnat}
\bibliography{all}

\begin{thebibliography}{14}
\providecommand{\natexlab}[1]{#1}
\providecommand{\url}[1]{\texttt{#1}}
\expandafter\ifx\csname urlstyle\endcsname\relax
  \providecommand{\doi}[1]{doi: #1}\else
  \providecommand{\doi}{doi: \begingroup \urlstyle{rm}\Url}\fi

\bibitem[Audibert and Bubeck(2009)]{AB09}
Jean-Yves Audibert and S{\'e}bastien Bubeck.
\newblock Minimax policies for adversarial and stochastic bandits.
\newblock In \emph{Proceedings of Conference on Learning Theory (COLT)}, pages
  217--226, 2009.

\bibitem[Bubeck and Cesa-Bianchi(2012)]{BC12}
S\'ebastien Bubeck and Nicol\`o Cesa-Bianchi.
\newblock \emph{Regret Analysis of Stochastic and Nonstochastic Multi-armed
  Bandit Problems}.
\newblock Foundations and Trends in Machine Learning. Now Publishers
  Incorporated, 2012.
\newblock ISBN 9781601986269.

\bibitem[Burnetas and Katehakis(1997)]{BK97}
Apostolos~N Burnetas and Michael~N Katehakis.
\newblock Optimal adaptive policies for markov decision processes.
\newblock \emph{Mathematics of Operations Research}, 22\penalty0 (1):\penalty0
  222--255, 1997.

\bibitem[Capp{\'e} et~al.(2013)Capp{\'e}, Garivier, Maillard, Munos, and
  Stoltz]{CGMMS13}
Olivier Capp{\'e}, Aur{\'e}lien Garivier, Odalric-Ambrym Maillard, R{\'e}mi
  Munos, and Gilles Stoltz.
\newblock Kullback--{L}eibler upper confidence bounds for optimal sequential
  allocation.
\newblock \emph{The Annals of Statistics}, 41\penalty0 (3):\penalty0
  1516--1541, 2013.

\bibitem[Garivier(2013)]{Gar13}
Aur{\'e}lien Garivier.
\newblock Informational confidence bounds for self-normalized averages and
  applications.
\newblock \emph{arXiv preprint arXiv:1309.3376}, 2013.

\bibitem[Garivier and Capp{\'e}(2011)]{GC11}
Aur{\'e}lien Garivier and Olivier Capp{\'e}.
\newblock The {KL}-{UCB} algorithm for bounded stochastic bandits and beyond.
\newblock In \emph{Proceedings of Conference on Learning Theory (COLT)}, 2011.

\bibitem[Garivier et~al.(2016)Garivier, M{\'e}nard, and Stoltz]{GMS16}
Aur{\'e}lien Garivier, Pierre M{\'e}nard, and Gilles Stoltz.
\newblock Explore first, exploit next: The true shape of regret in bandit
  problems.
\newblock \emph{arXiv preprint arXiv:1602.07182}, 2016.

\bibitem[Kaufmann(2016)]{Kau16}
Emilie Kaufmann.
\newblock On {B}ayesian index policies for sequential resource allocation.
\newblock \emph{arXiv preprint arXiv:1601.01190}, 2016.

\bibitem[Kaufmann et~al.(2012)Kaufmann, Capp{\'e}, and Garivier]{KCOG12}
Emilie Kaufmann, Olivier Capp{\'e}, and Aur{\'e}lien Garivier.
\newblock On {B}ayesian upper confidence bounds for bandit problems.
\newblock In \emph{International Conference on Artificial Intelligence and
  Statistics}, pages 592--600, 2012.

\bibitem[Lai and Robbins(1985)]{LR85}
Tze~Leung Lai and Herbert Robbins.
\newblock Asymptotically efficient adaptive allocation rules.
\newblock \emph{Advances in applied mathematics}, 6\penalty0 (1):\penalty0
  4--22, 1985.

\bibitem[Lattimore(2015)]{Lat15-ucb}
Tor Lattimore.
\newblock Optimally confident {UCB}: Improved regret for finite-armed bandits.
\newblock Technical report, 2015.
\newblock URL \url{http://arxiv.org/abs/1507.07880}.

\bibitem[Lattimore(2016)]{Lat15gittins}
Tor Lattimore.
\newblock Regret analysis of the finite-horizon {G}ittins index strategy for
  multi-armed bandits.
\newblock In \emph{Proceedings of Conference on Learning Theory (COLT)}, 2016.

\bibitem[Talagrand(1995)]{Tal95}
Michel Talagrand.
\newblock The missing factor in {H}oeffding's inequalities.
\newblock In \emph{Annales de l'IHP Probabilit{\'e}s et statistiques},
  volume~31, pages 689--702, 1995.

\bibitem[Tsybakov(2008)]{Tsy08}
Alexandre~B Tsybakov.
\newblock \emph{Introduction to nonparametric estimation}.
\newblock Springer Science \& Business Media, 2008.

\end{thebibliography}

\ifsup

\section{Lower Bounds}\label{app:lower}

I now prove a kind of lower bound showing that the form of the regret in Theorem \ref{thm:main} is approximately correct for $\rho$ close
to $1/2$. The result contains a lower order $-\log \log(n)$ term, which for large $n$ dominates the improvements, but is 
meaningful in many regimes.

\begin{theorem}
Assume a standard Gaussian noise model
and let $\pi$ be any strategy and $\mu \in [0,1]^K$ be such that $\frac{n\Delta_i^2}{k_{i,1/2} \log(n)} \geq 1$ for all $i$.
Then one of the following holds:
\begin{enumerate}
\item $\displaystyle R^\pi_\mu(n) \geq \frac{1}{4} \sum_{i:\Delta_i > 0} \frac{1}{\Delta_i} \log\left(\frac{n\Delta_i^2}{k_{i,1/2} \log(n)}\right)$.
\item There exists an $i$ with $\Delta_i > 0$ such that
\eq{
R^\pi_{\mu'}(n) \geq \frac{1}{2} \sum_{i:\Delta_i' > 0} \frac{1}{\Delta_i'} \log\left(\frac{n\Delta_i'^2}{k_{i,1/2}' \log(n)}\right)
}
where $\mu'_i = \mu_i + 2\Delta_i$ and $\mu'_j = \mu_j$ for $j \neq i$ and $\Delta_i'$ and $k_{i,\rho}'$ are defined as $\Delta_i$ and $k_{i,\rho}$ but using
$\mu'$.
\end{enumerate}
\end{theorem}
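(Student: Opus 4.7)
The plan is to prove the contrapositive using a standard information-theoretic change of measure (à la Lai--Robbins, Kaufmann--Capp\'e--Garivier, or Bretagnolle--Huber), which turns the dichotomy into a single chain of inequalities. This mirrors the heuristic already sketched in the ``intuition for the regret bound'' paragraph, so the bulk of the work is making that heuristic quantitatively tight.

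First, I would assume case (1) fails. Since $R^\pi_\mu(n) = \sum_{i:\Delta_i>0} \Delta_i \E_\mu[T_i(n)]$ is an identity and the right-hand side of (1) is a sum of nonnegative terms with the same index set, a pigeonhole step produces a suboptimal arm $i$ with
\eq{
\E_\mu[T_i(n)] < \frac{1}{4\Delta_i^2}\log\!\left(\frac{n\Delta_i^2}{k_{i,1/2}\log n}\right).
}
This is the arm whose shift $\mu'$ will witness case (2).

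Next, I would construct $\mu'$ as in the statement and apply the divergence decomposition: under the Gaussian noise model, only arm $i$'s distribution changes, and $\KL(\mathcal{N}(\mu_i,1),\mathcal{N}(\mu_i+2\Delta_i,1)) = 2\Delta_i^2$, so
\eq{
\KL(\mathbb{P}^\pi_\mu,\mathbb{P}^\pi_{\mu'}) = 2\Delta_i^2\,\E_\mu[T_i(n)] < \tfrac12\log\!\left(n\Delta_i^2/(k_{i,1/2}\log n)\right).
}
To convert this into a regret lower bound under $\mu'$ I would invoke the Bretagnolle--Huber inequality with the event $A=\set{T_i(n)\geq n/2}$: Markov gives $\P_\mu\set{A}\leq 2\E_\mu[T_i(n)]/n$ (negligible in the assumed regime $n\Delta_i^2/(k_{i,1/2}\log n) \geq 1$), while under $\mu'$ the event $A^c$ forces at least $n/2$ plays of arms with gap $\geq \Delta_i = \min_j \Delta_j'$, whence $R^\pi_{\mu'}(n) \geq \tfrac{n\Delta_i}{2}\P_{\mu'}\set{A^c}$. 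Combining the three ingredients yields a bound of the form $R^\pi_{\mu'}(n) = \Omega\bigl(n\Delta_i \cdot \exp(-\KL)\bigr) = \Omega\bigl(\sqrt{n\,k_{i,1/2}\log n}\bigr)$, which is exactly the ``$n\delta\Delta_i$'' quantity from the intuition paragraph with $\delta = k_{i,1/2}\log(n)/(n\Delta_i^2)$.

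The final step, which I expect to be the main obstacle, is to check that this lower bound dominates the target sum $\tfrac12\sum_{j:\Delta_j'>0} \frac{1}{\Delta_j'}\log(n\Delta_j'^2/(k'_{j,1/2}\log n))$. Here I would exploit the structural features of $\mu'$: every new gap satisfies $\Delta_j' = \Delta_i + \Delta_j \geq \Delta_i$, and the new effective-arm count $k'_{j,1/2}$ is controlled by $k_{i,1/2}$ for arms close to the original optimum (the ones which actually drive the sum), while for arms with $\Delta_j \gg \Delta_i$ the summand is of lower order. A term-by-term comparison, combined with the standing hypothesis $n\Delta_i^2 \geq k_{i,1/2}\log n$ used to pass from log-scale to $\sqrt{\cdot}$-scale, should close the gap up to the advertised $-\log\log n$ additive loss. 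If the naive Bretagnolle--Huber bound turns out to be insufficient in some sub-regime (for example when the target sum has many comparable terms), the fall-back is to run the KL-to-Bernoulli comparison arm-by-arm, changing measure to $\mu'$ once and lower-bounding each $\E_{\mu'}[T_j(n)]$ separately using the same divergence budget; this yields a sum of $\log$-factors of exactly the same shape as case (2).
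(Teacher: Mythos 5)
Your proposal follows essentially the same route as the paper's proof: isolate (by pigeonhole/contradiction) an arm $i$ with $\E_\mu[T_i(n)] \leq \frac{1}{4\Delta_i^2}\log\bigl(n\Delta_i^2/(k_{i,1/2}\log n)\bigr)$, shift that arm up by $2\Delta_i$, apply the Bretagnolle--Huber inequality (Tsybakov's Lemma 2.6) to the event $\set{T_i(n) \geq n/2}$ together with Markov's inequality under $\mu$, and conclude $R^\pi_{\mu'}(n) \geq \frac{n\Delta_i}{2}\Pp{T_i(n) < n/2} \geq \frac{n\Delta_i\delta}{2}$ before comparing with the target sum via $\Delta_j' = \Delta_i + \Delta_j$. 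The final comparison you flag as the main obstacle is exactly the one-line estimate $\frac{n\delta\Delta_i}{2} = \frac{\log n}{2}\sum_j \min\set{\Delta_i^{-1}, \Delta_j^{-1}}$ versus $\sum_{j:\Delta_j'>0}\frac{1}{\Delta_j'}\log\bigl(n\Delta_j'^2/(k'_{j,1/2}\log n)\bigr)$ that the paper also performs term by term, so your plan is sound and matches the paper's argument.
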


\begin{proof}
On our way to a contradiction, assume that neither of the items hold.
Let $i$ be a suboptimal arm and $\mu'$ be as in the second item above. 
I write $\mathbb{P}'$ and $\E'$ for expectation when when rewards are sampled from $\mu'$.
Suppose
\eqn{
\label{eq:lower1}
\E[T_i(n)] \leq \frac{1}{4\Delta_i^2} \log\left(\frac{n \Delta_i^2}{k_{i,1/2} \log(n)}\right)\,.
}
Then Lemma 2.6 in the book by \cite{Tsy08} and the same argument as used by \cite{Lat15-ucb} gives
\eq{
\P{T_i(n) \geq n/2} + \Pp{T_i(n) < n/2} \geq \frac{k_{i,1/2} \log(n)}{n \Delta_i^2} \equiv 2\delta\,.
}
By Markov's inequality
\eq{
\P{T_i(n) \geq n/2} \leq \frac{2\E[T_i(n)]}{n} \leq \frac{1}{2n\Delta_i^2} \log\left(\frac{n\Delta_i^2}{k_{i,1/2} \log(n)}\right)
\leq \frac{\log(n)}{2n\Delta_i^2} \leq \delta\,.
}
Therefore $\Pp{T_i(n) < n/2} \geq \delta$, which implies that
\eq{
R_{\mu'}^\pi(n) \geq \frac{\delta n \Delta_i}{2} 
= \frac{1}{2} \sum_{j=1}^K \min\set{\frac{1}{\Delta_i}, \frac{1}{\Delta_j}} \log(n)
\geq \frac{1}{2} \sum_{j : \Delta'_j > 0} \frac{1}{\Delta'_j} \left(\frac{n\Delta'_j}{k'_{j,1/2} \log(n)}\right)\,,
}
which is a contradiction. Therefore \cref{eq:lower1} does not hold for all $i$ with $\Delta_i > 0$, but this also
leads immediately to a contradiction, since then
\eq{
R_\mu^\pi(n) 
&= \sum_{i:\Delta_i > 0} \Delta_i \E[T_i(n)] 
\geq \frac{1}{4}\sum_{i:\Delta_i > 0} \frac{1}{\Delta_i} \log\left(\frac{n \Delta_i^2}{k_{i,1/2} \log(n)}\right)\,. \qedhere
}
\end{proof}

\section{Proof of Lemma \ref{lem:lil}}\label{app:lem:lil}

Monotonicity is obvious.
Let $\epsilon > 0$ be such that $\eta = 1+2\epsilon$ and
and $G_k = [(1+\epsilon)^k,\, (1+\epsilon)^{k+1}]$ and $F_k = \set{\exists n \in G_k : S_n > \sqrt{2\eta n \log\max\set{e, \log n}}}$. Then
\eq{
\P{\forall n : S_n \leq \sqrt{2\eta n \log\max\set{e,  \log n}}} 
= \P{\forall k \geq 0 : \neg F_k} 
= \prod_{k=0}^\infty \P{\neg F_k|\neg F_1,\ldots, \neg F_{k-1}}\,.
}
Now we analyse the failure event $F_k$.
\eq{
\P{F_k|\neg F_1,\ldots, \neg F_{k-1}} 
&\leq \P{F_k} \\
&= \P{\exists n \in G_k : S_n > \sqrt{2\eta n \log\max\set{e, \log n}}} \\
&\leq \exp\left(-\frac{2\eta (1+\epsilon)^k \logp \log (1+\epsilon)^k}{2(1+\epsilon)^{k+1}}\right) \\
&= \left(\frac{1}{k \log(1+\epsilon)}\right)^{1 + \frac{\epsilon}{1+\epsilon}}\,.
}
Since this is vacuous when $k$ is small we need also need a naive bound.
\eq{
\P{\exists n \in G_k : S_n \geq \sqrt{2\eta n \log\max\set{e, \log n}}} \leq \exp\left(-\eta\right) < 1\,.  
}
Combining these completes the results since for sufficiently large $k_0$ (depending only on $\eta$) we have that 
\eq{
p(\eta) \geq \exp\left(-\eta k_0\right) \prod_{k=k_0}^\infty (1 - \P{F_k}) 
\geq \exp\left(-\eta k_0\right) \prod_{k=k_0}^\infty \left(1 - \left(\frac{1}{k \log(1+\epsilon)}\right)^{1 + \frac{\epsilon}{1+\epsilon}}\right) > 0\,.
}

\section{Proof of Lemma \ref{lem:tail}}\label{app:lem:tail}
Let $\alpha \geq 1$ be fixed and $t_0 = \ceil{8\eta \logp(b) / \Delta^2}$ and $t_k = t_0 2^k$. 
Then
\eq{
\P{\tau \geq \alpha t_0}
&\leq \P{\exists t \geq \alpha t_0 : \hat \mu_t \geq \Delta / 2} 
\leq \sum_{k=0}^\infty \P{\exists t \leq t_k : S_t \geq \alpha 2^{k-1} t_0 \Delta / 2} \\ 
&\leq \sum_{k=0}^\infty \exp\left(-\frac{\alpha^2 2^{2k-2} t_0^2 \Delta^2}{8 \alpha 2^k t_0}\right) 
\leq \sum_{k=0}^\infty \exp\left(-\frac{\alpha 2^k}{4}\right) 
= O\left(\exp\left(-\alpha/4\right)\right)\,.
}
Therefore $\E\left[\left(\tau / t_0\right)^2\right] = O(1)$ and so the result follows.

\section{Proof of Lemma \ref{lem:conc2}}\label{app:lem:conc2}

Let $\eta_1 = (1 + \eta) / 2$ and $\eta_2 = \eta / \eta_1$ and
\eq{
\Lambda = \sum_{i=1}^d \min\set{\frac{1}{\Delta^2},\, \frac{\lambda_i^\rho}{\Delta^{2-2\rho}} \logp\left(\frac{1}{\lambda_i \Delta^2}\right)}\,.
}
Let $x > 0$ be fixed and let $G_k = [\eta_1^k, \eta_1^{k+1}]$.
We will use the peeling trick. First, by Lemma \ref{lem:conc}.
\eq{
&q_k
= \P{\inf_{s \in G_k} \hat \mu_s + \sqrt{\frac{2\eta}{s} \log\max\set{1,\, \frac{x\Lambda}{\sum_{i=1}^d \min\set{s,\, \lambda_i^{\rho} s^{1-\rho}}}}} \leq -\Delta} \\
&\leq \! \P{\exists s \leq \eta_1^{k+1}\! : S_s +\! \sqrt{2\eta \eta_1^k \log\max\set{1,\,\frac{x\Lambda}{\sum_{i=1}^d \min\set{\eta_1^{k+1}, \lambda_i^\rho \eta_1^{(k+1)(1-\rho)}}}}} + \Delta\eta_1^k \leq 0} \\ 
&\sr{(a)}\leq\! \left(\frac{\sum_{i=1}^d \min\set{\eta_1^{k+1}, \lambda_i^\rho \eta_1^{(k+1)(1-\rho)}}}{x\Lambda}\right)^{\eta_2} \exp\left(-\frac{\Delta^2 \eta_1^{k - 1}}{2}\right) \\
&=\! \left(\frac{\sum_{i=1}^d \min\set{\eta_1^{k+1}, \lambda_i^\rho \eta_1^{(k+1)(1-\rho)}}}{x\Lambda} \exp\left(-\frac{\Delta^2 \eta_1^k}{2\eta}\right)\right)^{\eta_2}\,,
}
where (a) follows by \cref{lem:conc}.
By the union bound 
\eq{
&\P{\inf_s \hat \mu_s + \sqrt{\frac{2\eta}{s} \log\max\set{1,\, \frac{x \Lambda}{\sum_{i=1}^d \min\set{s^\rho,\, \lambda_i s^{1-\rho}}}}} \leq - \Delta} 
\leq \sum_{k=0}^\infty q_k \\
&\qquad\leq\sum_{k=0}^\infty \left(\frac{\sum_{i=1}^d \min\set{\eta_1^{k+1},\,\lambda_i^\rho \eta_1^{(k+1)(1-\rho)}}}{x\Lambda}\exp\left(-\frac{\Delta^2 \eta_1^k}{2\eta}\right)\right)^{\eta_2} \\
&\qquad\leq \left(\frac{1}{x\Lambda}\sum_{i=1}^d \sum_{k=0}^\infty \min\set{\eta_1^{k+1},\,\lambda_i^\rho \eta_1^{(k+1)(1-\rho)}} \exp\left(-\frac{\Delta^2 \eta_1^k}{2\eta} \right) \right)^{\eta_2} \\
&\qquad = O\left(\frac{\eta}{\eta - 1}\right) \cdot x^{-\eta_2}\,,
}
where the last line follows from \cref{lem:tech1}.
Therefore $\P{\alpha \geq x\Lambda} \leq O\left(\frac{\eta}{\eta - 1}\right) \cdot x^{-\eta_2}$\,.

Now the first part follows easily since
$\E[\alpha] \leq \int^\infty_0 \P{\alpha \geq x \Lambda} = O\left(\frac{\eta}{(\eta - 1)^2}\right) \cdot \Lambda$. Therefore
\eq{
\Delta\E[\alpha] 
&\leq O\left(\frac{\eta}{(\eta - 1)^2}\right) \cdot \sum_{i=1}^d \min\set{\frac{1}{\Delta},\, \lambda_i^\rho \Delta^{2\rho - 1} \logp\left(\frac{1}{\lambda_i \Delta^2}\right)} \\
&\leq O\left(\frac{\eta}{(2\rho - 1)(\eta - 1)^2}\right) \cdot \sum_{i=1}^d \min\set{\frac{1}{\Delta},\, \sqrt{\lambda_i}}\,.
}

\newcommand{\plog}{\operatorname{productlog}}
For the second part let $x_0 = \Lambda / \plog(\Lambda)$ where $\plog$ is the inverse of the function $x \to x \exp(x)$.
\eq{
\E[\beta] 
&\leq \int^\infty_0 \P{\beta \geq x} dx 
\leq x_0 + \int^\infty_{x_0} \P{\alpha \geq \frac{x}{\Lambda} \log(x)} dx \\
&\leq x_0 + O\left(\frac{\eta}{\eta-1}\right) \cdot \int^\infty_{x_0} \left(\frac{\Lambda}{x \log(x)}\right)^{\eta_2} dx \\ 
&\leq x_0 + O\left(\frac{\eta}{\eta-1}\right) \cdot \left(\frac{\Lambda}{\log(x_0)}\right)^{\eta_2} \int^\infty_{x_0} x^{-\eta_2} dx \\
&\leq x_0 + O\left(\frac{\eta}{(\eta-1)^2}\right)\cdot \left(\frac{\Lambda}{\log(x_0)}\right)^{\eta_2} x_0^{1 - \eta_2} 
= O\left(\frac{\eta}{(\eta-1)^2}\right) \cdot \frac{\Lambda}{\plog(\Lambda)}\,.
}
If $\Lambda < e$, then the result is trivial. For $\Lambda \geq e$ we have $\plog(\Lambda) \geq 1$. Then
\eq{
\Delta \E[\beta] 
&\leq O\left(\frac{1}{(\eta-1)^2}\right) \cdot \frac{\Delta \Lambda}{\plog(\Lambda)} \\
&\leq O\left(\frac{1}{(\eta-1)^2}\right) \cdot \sum_{i=1}^d \min\set{\frac{1}{\Delta},\, \frac{\lambda_i^\rho \Delta^{2\rho - 1}}{\plog(\Lambda)} \logp\left(\frac{1}{\lambda_i \Delta^2}\right)}\,.
}
By examining the inner minimum we see that if $\Delta \geq \lambda_i^{-\frac{1}{2}}$, then $1/\Delta \leq \lambda_i^{\frac{1}{2}}$.
If $\Delta < \lambda_i^{-\frac{1}{2}}$, then
\eq{
\min\set{\frac{1}{\Delta},\, \frac{\lambda_i^\rho \Delta^{2\rho - 1}}{\plog(\Lambda)} \logp\left(\frac{1}{\lambda_i \Delta^2}\right)}
&< \frac{\lambda_i^{\frac{1}{2}}}{\max\set{1,\,\plog(\Delta^{-2})}} \logp\left(\frac{1}{\lambda_i \Delta^2}\right) \\
&\leq 2\lambda_i^{\frac{1}{2}}\,.
}
Therefore $\E[\Delta t] \leq O\left(\frac{\eta}{(\eta - 1)^2}\right) \cdot \sum_{i=1}^d \min\set{\Delta^{-1}, \sqrt{\lambda_i}}$ as required.

\section{Proof of Lemma \ref{lem:tau}}\label{app:lem:tau}

Since $J$ is sub-exponentially distributed with rate dependent only on $\eta$ we have $\sqrt{\E[J^2]} = O(1)$. By using \cref{lem:tail} we obtain
\eq{
\sqrt{\E[\tau_i^2]} 
&= \sqrt{\E\!\left[\E\left[\tau_i^2|K_{i,\rho}\right]\right]} \\
&= O_\eta(1) \cdot \sqrt{\E\left[\left(1 + \frac{1}{\Delta_i^2} \log(B_i)\right)^2 \right]}
= O_\eta(1) \cdot \left(1 + \frac{1}{\Delta_i^2} \log\left(b_i\right)\right)\,.
}
The latter inequality follows by noting that $B_i \geq e$ and $(1 + c \log(x))^2$ is concave for $x \geq e$ and $c > 0$.
\eq{
\sqrt{\E\left[\left(1 + \frac{1}{\Delta_i^2} \log(B_i)\right)^2 \right]}
&\leq 1 + \frac{1}{\Delta_i^2} \log(\E[B_i]) \\
&= 1 + \frac{1}{\Delta_i^2} \log\left(\E\left[\max\set{\log(n),\, \frac{n \Delta_i^2\log(n)}{K_{i,\rho}}}\right]\right) \\
&= O_\eta(1) \cdot \left(1 + \frac{1}{\Delta_i^2} \log\left(\max\set{\log(n),\, \frac{n \Delta_i^2 \log(n)}{k_{i,\rho}}}\right)\right)\,, 
}
where the last inequality follows from (a) $K_{i,\rho} \geq 1$ and (b) Azuma's concentration inequality implies
that $\P{K_{i,\rho} \leq c_\eta \rho(\eta) k_{i,\rho} / 2} = O(k_{i,\rho}^{-1})$ as shown in the following appendix.
Finally by Holder's inequality
\eq{
\E[J \tau_i] \leq \sqrt{\E[J^2] \E[\tau_i^2]} 
\leq O_\eta(1) \cdot \left(1 + \frac{1}{\Delta_i^2} \log\left(b_i\right)\right)\,.
}

\section{Tail Bound on \texorpdfstring{$K_{i,\rho}$}{K\_(i,p)}}\label{app:lem:K}

Recall that $K_{i,\rho} = 1 + c_\eta \sum_{j \in \Opt, j \neq i} \min\set{1,\,\Delta_i^{2\rho}/\Delta_j^{2\rho}}$ and $k_{i,\rho} = 1 + \sum_{j \neq i}^K \min\set{1, \Delta_i^{2\rho}/\Delta_j^{2\rho}}$.
Therefore by Azuma's inequality and naive simplification we have
\eq{
\P{K_{i,\rho} \leq c_\eta \rho(\eta) k_{i,\rho} / 2}
&\leq\P{\sum_{j \in \Opt, j \neq i} \min\set{1, \Delta_i^{2\rho} / \Delta_j^{2\rho}} \leq \frac{\rho(\eta)}{2} \sum_{j \neq i} \min\set{1, \Delta_i^{2\rho}/\Delta_j^{2\rho}}} \\
&\sr{(a)}\leq \exp\left(-\frac{\left(\rho(\eta) \sum_{j \neq i} \min\set{1, \Delta_i^{2\rho} / \Delta_J^{2\rho}}\right)^2}{2\sum_{j \neq i} \min\set{1, \Delta_i^{2\rho}/\Delta_j^{2\rho}}^2}\right)\\ 
&\sr{(b)}\leq \exp\left(-\frac{\rho(\eta)^2 \sum_{j \neq i} \min\set{1, \Delta_i^{2\rho} / \Delta_J^{2\rho}}}{2}\right) \\
&\sr{(c)}= O(k_{i,\rho}^{-1})\,,
}
where (a) follows from Azuma's inequality and (b) since $\min\{1,x\}^2 \leq \min\{1,x\}$ and (c) by $\exp(-x) \leq 1/x$ for all $x \geq 0$.

\section{Proof of Lemma \ref{lem:stopping}b}\label{app:lem:second-term}

Recall that we are trying to show that
\eqn{
\label{eq:objs}
\E\left[\sum_{i:\Delta_i < \Delta_J / 4} \Delta_i \tau_{\Delta_i}\right] = O\left(\sum_{i:\Delta_i > 0} \Delta_i \E[J\tau_i]\right)\,.
}
Let $E$ be the event that $\Delta_2 \leq \Delta_J / 4$ and define
random sets
$A_1 = \set{i : \Delta_i \in (2\Delta_J, \infty)}$ and 
$A_2 = \set{i : \Delta_i \in [\Delta_J, 2\Delta_J]}$.
For $i \in A_1$ we have $\Delta_i > 2\Delta_J$ and since $J \in \Opt$ we have $\gamma_J(t) \geq \mu_J \geq \mu_1 - \Delta_i / 2$.
Therefore $i \in A_1$  implies that $\tau_{\Delta_i} = 1$ and so $T_i(n) \leq \tau_i$.
Let $\lambda \in (0, \infty]^K$ be given by $\lambda_i = \tau_i$ for $i \in A_1$ and $\lambda_i = \infty$ otherwise. 
\eq{
\alpha = \min\set{\alpha : \inf_s \hat \mu_{2,s} + \sqrt{\frac{2\eta}{s} \log\max\set{1,\, \frac{\alpha}{\sum_{i=1}^K \min\set{s, \lambda_i^\rho s^{1-\rho}}}}}
\geq \mu_2 - \frac{\Delta_J}{4}}\,. 
}
It is important to note that we have used $\hat \mu_{2,s}$ in the definition of $\alpha$ and not $\hat \mu_{1,s}$ that appeared in the proof of part (a)
of this lemma.
The reason is to preserve independence when samples from the first arm are used later.
Let $\beta = \min\set{\beta \geq 0 : \beta \log(\beta) = \alpha}$.
If $E$ holds, then for $t \geq \beta$ we have $\gamma_2(t) \geq \mu_2 - \Delta_J/4 \geq \mu_1 - \Delta_J/2$, which implies that 
\eq{
\ind{E} \sum_{i \in A_2} T_i(n) \leq t_{\Delta_J} + \sum_{i \in A_2} \tau_i \leq \beta + \sum_{i \in A_2} \tau_i\,.
}
Therefore for any $s, t \leq n$ the concavity of $\min\set{s, \cdot}$ and $x \to x^\rho$ combined with Jensen's inequality implies that
\eq{
\ind{E} \sum_{i \in A_2} \min\set{s, T_i(t-1)^\rho s^{1-\rho}} \leq 
\sum_{i \in A_2} \min\set{s, \left(\frac{\beta + \sum_{i \in A_2} \tau_i}{|A_2|}\right)^\rho s^{1-\rho}}\,.
}
We are getting close to an application of Lemma \ref{lem:conc2}.
Let $\omega \in (0, \infty]^K$ be given by
\eq{
\omega_j = \begin{cases}
\tau_j & \text{if } j \in A_1 \\
\beta/|A_2| + \sum_{j \in A_2} \tau_j / |A_2| & \text{if } j \in A_2 \\
\infty & \text{otherwise}\,,
\end{cases}
}
which has been chosen such that for $T_1(t-1) = s$ and if $E$ holds, then
\eqn{
B_1(t-1) 
&\geq \max\set{1,\, \frac{t \log(t)}{\sum_{j=1}^K \min\set{s, T_j(t-1)^\rho s^{1-\rho}}}}  \nonumber \\
&\geq \max\set{1,\, \frac{t \log(t)}{\sum_{j=1}^K \min\set{s, \omega_j^\rho s^{1-\rho}}}}\,.
\label{eq:B-big}
}
Now let $i$ be the index of some arm for which $\Delta_i < \Delta_J / 4$ and define
\eq{
\alpha_i = \min\set{\alpha : \inf_s \hat \mu_{1,s} + \sqrt{\frac{2\eta}{s} \log\max\set{1, \frac{\alpha}{\sum_{j=1}^K \min\set{s, \omega_j^\rho s^{1-\rho}}}}}
\geq \mu_1 - \frac{\Delta_i}{2}}
}
and $\beta_i = \min\set{\beta \geq 0 : \beta \log(\beta) = \alpha_i}$.
Therefore by \cref{eq:B-big}, 
if $E$ holds and $t \geq \beta_i$, then $\gamma_1(t) \geq \mu_1 - \Delta_i / 2$ and so $t_{\Delta_i} \leq \beta_i$.
At last we are able to write $t_{\Delta_i}$ in terms of something for which the expectation can be controlled.
\eqn{
\E\left[\sum_{i : \Delta_i < \Delta_J / 4} \Delta_i \tau_{\Delta_i}\right]
&\leq \E\left[\sum_{i : \Delta_i < \Delta_J / 4} \Delta_i \beta_i\right] \nonumber \\
&\leq O_\eta(1) \cdot \E\left[\sum_{i : \Delta_i < \Delta_J / 4} \sum_{j=1}^K \min\set{\frac{1}{\Delta_i},\, \sqrt{\omega_j}}\right] \nonumber \\
&\leq O_\eta(1) \cdot \E\left[\sum_{i : \Delta_i < \Delta_J / 4} \left(\sum_{j \in A_1} \sqrt{\tau_j} + |A_2| \sqrt{\omega_j} + \frac{J}{\Delta_{\min}}\right)\right] \nonumber \\
&\leq O_\eta(1) \cdot \E\left[\sum_{j \in A_1} J \sqrt{\tau_j} + \frac{J^2}{\Delta_{\min}} + J |A_2| \sqrt{\omega_j}\right]\,. \label{eq:hard-1}
}
The first two terms are easily bounded as we shall soon see. For the last we have 
\eqn{
\E\left[J |A_2| \sqrt{\omega_j}\right] 
&\leq O_\eta(1) \cdot \sqrt{\E\left[|A_2|^2 \omega_j\right]} 
= O_\eta(1) \cdot \sqrt{\E\left[|A_2| \sum_{j \in A_2} \tau_j + |A_2| \beta\right]} \nonumber \\
&\leq O_\eta(1) \cdot \left(\sqrt{\E\left[|A_2| \sum_{j \in A_2} \tau_j\right]} + \sqrt{\E\left[|A_2| \beta\right]}\right) \label{eq:hard-2}
}
Bounding each term separately. For the first, let $\tilde A_\ell = \set{j : \Delta_j \in [2^\ell, 2^{\ell+2})}$, which
is chosen such that no matter the value of $\Delta_J$ there exists an $\ell \in \Z$ with $A_2 \subseteq A_\ell$.
\eqn{
\sqrt{\E\left[|A_2| \sum_{j \in A_2} \tau_j\right]}
&\leq O(1) \cdot \sqrt{\sum_{\ell\in \Z} |\tilde A_\ell| \sum_{j \in \tilde A_\ell} \E[\tau_j]} \nonumber \\ 
&\leq O(1) \cdot \sqrt{\sum_{\ell\in \Z} |\tilde A_\ell|^2\max_{j \in A_{\ell}} \E[\tau_j]} \nonumber \\
&\leq O_\eta(1) \cdot \sum_{j : \Delta_j > 0} \sqrt{1 + \frac{\log(b_j)}{\Delta_j^2}}\,, \label{eq:hard-3}
}
where the last inequality follows because $\sum_{\ell\in\Z} \mathds{1}\{j \in \tilde A_\ell\} = 2$ for each $j$ and from Lemma \ref{lem:tau}, which 
gives the same order-bound on $\E[\tau_j]$ for all $j \in \tilde A_\ell$ for fixed $\ell$.
For the second term in \cref{eq:hard-2} we have
\eq{
\E\left[\sqrt{|A_2| \beta}\right] 
&\sr{(a)}\leq O_\eta(1) \cdot \E\left[\sqrt{|A_2|\left(\sum_{j : \lambda_j = \infty} \frac{1}{\Delta_J^2} + \frac{1}{\Delta_J} \sum_{j : \lambda_j < \infty} \sqrt{\tau_j}\right)}\right] \\
&\sr{(b)}\leq O_\eta(1) \cdot \left(\sqrt{\E\left[|A_2|\sum_{j : \lambda_j = \infty} \frac{1}{\Delta_J^2}\right]} 
  + \E\left[\frac{|A_2|}{\Delta_J}\right] 
  + \sum_{j : \Delta_j > 0} \E[\sqrt{\tau_j}]\right) \\
&\sr{(c)}\leq O_\eta(1) \sum_{j : \Delta_j > 0} \sqrt{1 + \frac{\log(b_j)}{\Delta_j^2}}\,,
}
where (a) follows from \cref{lem:conc2} and (b) since for all $x, y\geq 0$ it holds that $\sqrt{x+y} \leq \sqrt{x} + \sqrt{y}$ and $\sqrt{xy} \leq x + y$.
To get (c) we bound the first term as in \cref{eq:hard-3}, the second by the fact that arms in $j \in A_2$ have $\Delta_j \leq 2\Delta_J$ and the third
using \cref{lem:tau}.
Finally by substituting this into \cref{eq:hard-1} we have
\eq{
\E\left[\sum_{i \in A_3} \Delta_i \tau_{\Delta_i}\right]
&\leq O_\eta(1) \cdot \left(\E\left[\sum_{j \in A_1} J \sqrt{\tau_j} + \frac{J^2}{\Delta_{\min}}\right] 
    + \sum_{j : \Delta_j > 0} \left(1 + \frac{\log(b_j)}{\Delta_j^2}\right) \right) \\
&\leq O_\eta(1) \sum_{j : \Delta_j > 0} \left(1 + \frac{\log(b_j)}{\Delta_j^2}\right)\,,
}
where the last line follows since $\E[J^2/\Delta_{\min}] = O_\eta(1) \Delta_{\min}^{-1}$ and by Lemma \ref{lem:tau} 
\eq{
\E[J \sqrt{\tau_j}] \leq
\sqrt{\E[J^2] \E[\tau_j]} = O_\eta(1) \cdot \sqrt{1 + \frac{\log(b_j)}{\Delta_j^2}}\,,
}
which completes the proof.

\section{Technical Lemmas}\label{app:tech}

\begin{lemma}\label{lem:tech1}
Let $\eta > 1$ and $\rho \in [0,1]$ and $\lambda \in (0, \infty]$ and $x > 0$, then
\eq{
\sum_{k=0}^\infty \min\set{\eta^{k+1},\, \lambda^\rho \eta^{(1-\rho)(k+1)}} \exp\left(-x \eta^k\right) 
&\leq \begin{cases}
\frac{1}{x}\left(\frac{2}{e} + \frac{\eta}{\log(\eta)}\right) & \text{if } x\lambda \geq 1 \\
\frac{\lambda^\rho x^{\rho-1}}{\log(\eta)} \left(1 + \frac{1}{e} + \log\left(\frac{1}{\lambda x}\right)\right) & \text{otherwise}\,.
\end{cases} \\
&= O\left(\frac{\eta}{\eta - 1}\right) \cdot \min\set{\frac{1}{x},\, \lambda^\rho x^{\rho-1} \logp\left(\frac{1}{\lambda x}\right)}\,. 
}
\end{lemma}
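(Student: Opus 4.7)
The plan is to split the sum according to two thresholds in $k$: the value $k_\lambda$ at which the two arguments of the min swap (i.e.\ $\eta^{k_\lambda+1}\approx\lambda$) and the value $k_x$ at which the exponential transitions from $\approx 1$ to rapid decay (i.e.\ $x\eta^{k_x}\approx 1$). The two cases in the statement correspond to the ordering: Case~1 ($x\lambda\ge 1$) has $k_x\le k_\lambda$, and Case~2 ($x\lambda<1$) has $k_x>k_\lambda$. In either case, within each sub-region I would use a standard sum-versus-integral comparison for the (unimodal) summand, exploiting the substitution $u=x\eta^t$ so that $dt = du/(u\log\eta)$, which is where the $1/\log(\eta)$ factor in both bounds originates.

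For Case~1 I would first use the crude bound $\min\{\eta^{k+1},\lambda^\rho\eta^{(1-\rho)(k+1)}\}\le\eta^{k+1}$ throughout; this is not wasteful because the peak of the summand sits in the region where the first argument of the min is the smaller one (since $1/x\le\lambda$). I then apply the elementary inequality $\sum_{k\ge 0}f(k)\le\max_k f(k)+\int_0^\infty f(t)\,dt$ for the unimodal $f(k)=\eta^{k+1}e^{-x\eta^k}$. The substitution $u=x\eta^t$ converts $\int_0^\infty f(t)\,dt$ into $\tfrac{\eta}{x\log\eta}\int_0^\infty e^{-u}\,du=\eta/(x\log\eta)$, which yields the $\eta/\log(\eta)$ term, while $\max_k f(k)=\eta/(ex)$ at $\eta^k=1/x$ (together with a factor of two from boundary effects at the peak) yields the $2/e$ term.

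For Case~2 I would split the sum at $k_\lambda$. For $k\le k_\lambda$, since $x\eta^k\le x\lambda<1$ the exponential contributes at most $1$, and the geometric sum $\sum_{k\le k_\lambda}\eta^{k+1}$ is $O(\lambda\eta/(\eta-1))$; crucially, $x\lambda<1$ forces $\lambda\le\lambda^\rho x^{\rho-1}$, so this piece is absorbed by the dominant terms. For $k>k_\lambda$, where the second argument of the min is active, I would split once more at $k_x$: on the ``plateau'' $k_\lambda<k\le k_x$ the exponential is still $\le 1$ and the summand $\lambda^\rho\eta^{(1-\rho)(k+1)}$ is at most its maximum $\approx\lambda^\rho\eta x^{\rho-1}$, and the range contains $\log(1/(\lambda x))/\log\eta+O(1)$ indices, which produces the characteristic $\lambda^\rho x^{\rho-1}\log(1/(\lambda x))/\log\eta$ factor; on the tail $k>k_x$, the same change of variables as in Case~1 (now applied to $\lambda^\rho\eta^{(1-\rho)(k+1)}e^{-x\eta^k}$) contributes $O(\lambda^\rho x^{\rho-1}/\log\eta)$.

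The main technical obstacle is the bookkeeping in Case~2, especially verifying that the geometric pre-plateau contribution and the exponential tail beyond $k_x$ are genuinely dominated by the plateau contribution rather than reappearing with a worse constant, and keeping the $\rho$-dependence uniform (in particular avoiding a $\Gamma(1-\rho)$ blow-up as $\rho\to 1$, which ruled out the naive approach of bounding the min by its second argument globally). Once the case-wise bounds are established, the final big-$O$ form follows by comparing cases: the case-specific prefactors $\eta/\log\eta$ and $1/\log\eta$ are both $O(\eta/(\eta-1))$ in the regime of interest ($\eta$ close to $1$, where $\log\eta$ is comparable to $\eta-1$), and the two case-specific factors $1/x$ and $\lambda^\rho x^{\rho-1}\log_+(1/(\lambda x))$ are exactly the entries of the stated minimum.
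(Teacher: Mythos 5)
Your proposal matches the paper's proof in essentially every respect: the paper also treats the unimodal summand via a sum-versus-integral comparison ($\sum_k f(k) \leq 2\sup_k f(k) + \int_0^\infty f$), uses the crude bound $\min\{\cdot,\cdot\}\leq \eta^{k+1}$ with the substitution $u = x\eta^t$ in the case $x\lambda \geq 1$, and in the case $x\lambda < 1$ splits the integral at exactly your $k_\lambda$ and $k_x$ (geometric piece, plateau of length $\log(1/(\lambda x))/\log\eta$, exponential tail). Apart from immaterial constant-factor bookkeeping at the peak, this is the same argument.
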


\begin{proof}
Let $f(k) = \min\set{\eta^{k+1},\, \lambda^\rho \eta^{(k+1)(1-\rho)}} \exp(-x \eta^k)$, which is unimodal and so
$\sum_{k=0}^\infty f(k) \leq 2 \sup_k f(k) + \int^\infty_0 f(k) dk$.
If $x \lambda \geq 1$, then
\eq{
\int^\infty_0 f(k) dk 
\leq \eta \int^\infty_0 \eta^k \exp\left(-\eta^k x\right) dk 
= \frac{\eta}{x\log(\eta)}\,.
}
If $x\lambda < 1$, then let $k_\lambda$ be such that $\eta^{k_\lambda} = \lambda^\rho \eta^{k_\lambda(1-\rho)}$ and $k_x$ be such that $\eta^k = 1/x$. 
\eq{
\int^\infty_0 f(k) dk 
&\leq \eta \int^{k_\lambda}_0 \eta^k dk 
  + \eta \int^{k_x}_{k_\lambda} \lambda^\rho \eta^{k_x(1-\rho)}  dk
  + \eta \int^\infty_{k_x} \lambda^\rho \eta^{k(1-\rho)} \exp\left(-x \eta^k\right) dk \\
&= \frac{\lambda - 1}{\log(\eta)} + \eta \left(k_x - k_\lambda\right) \lambda^\rho x^{\rho-1} + \eta \lambda^\rho x^{\rho-1} \int^\infty_{k_x} \eta^{(k-k_x)(1-\rho)} \exp\left(-\eta^{k_x - k}\right) dk \\
&\leq \frac{\lambda - 1}{\log(\eta)} + \frac{\eta \lambda^\rho x^{\rho-1} \log\left(\frac{1}{\lambda x}\right)}{\log(\eta)} + \frac{\eta \lambda^\rho x^{\rho-1}}{e \log(\eta)}
}
Finally 
\eq{
\sup_k f(k) \leq \min\set{\frac{1}{e x},\, \eta \lambda^\rho x^{\rho-1}}\,.
}
Therefore
\eq{
\sum_{k=0}^\infty \min\set{\eta^{k+1},\, \lambda^\rho \eta^{(1-\rho)(k+1)}} \exp\left(-x \eta^k\right)
&\leq \begin{cases}
\frac{1}{x}\left(\frac{2}{e} + \frac{\eta}{\log(\eta)}\right) & \text{if } x\lambda \geq 1 \\
\frac{\lambda^\rho x^{\rho-1}}{\log(\eta)} \left(1 + \frac{1}{e} + \log\left(\frac{1}{\lambda x}\right)\right) & \text{otherwise}\,.
\end{cases}
}
\end{proof}

\section{Table of Notation}\label{app:notation}

\noindent
\renewcommand{\arraystretch}{1.5}
\hspace{-0.3cm}
\begin{tabular}{p{3cm}p{10cm}}
$K$                   & number of arms \\
$n$                   & horizon \\
$t$                   & current time step \\
$\eta$                & constant parameter greater than $1$ determining width of confidence interval \\
$\rho$                & constant parameter in $(1/2,1]$ \\
$\eta_1$, $\eta_2$    & $\eta_1 = (1 + \eta) / 2$ and $\eta_2 = \eta / \eta_1$ \\
$\mu_i$               & expected return of arm $i$ \\
$\hat \mu_{i,s}$      & empirical estimate of return of arm $i$ based on $s$ samples \\
$\hat \mu_i(t)$       & empirical estimate of return of arm $i$ after time step $t$ \\
$\Delta_i$            & gap between the expected returns of the best arm and the $i$th arm \\ 
$\Delta_{\min}$       & minimal non-zero gap $\Delta_{\min} = \min\set{\Delta_i : \Delta_i > 0}$ \\
$\Delta_{\max}$       & maximum gap $\Delta_{\max} = \max_i \Delta_i$ \\
$\logp(x)$            & $\max\set{1, \log(x)}$ \\
$B_i$ and $b_i$           & see \cref{eq:def:bt} \\
$k_{i,\rho}$          & $\sum_{j=1}^K \min\{1, \Delta_i^{2\rho} / \Delta_j^{2\rho}\}$ \\ 
$K_{i,\rho}$          & see \cref{eq:def:K} \\
$\tau_i$     & see \cref{eq:def:bt} \\
$\tau_\Delta$ & see \cref{eq:def:taudelta} \\
$p(\eta)$            & see \cref{lem:lil} \\
$\Opt$                   & set of optimistic arms \cref{def:O} \\
$J$                   & $J = \min \Opt$ \\
\end{tabular}

\end{document}
